\documentclass{article}

\usepackage{microtype}
\usepackage{graphicx}
\usepackage{subcaption}
\usepackage{pifont}
\usepackage{xurl}

\newcommand{\xmark}{\ding{55}}
\newcommand{\cmark}{\ding{51}}
\usepackage{booktabs} 

\usepackage{hyperref}
\newcommand{\argmin}{\operatorname*{arg\,min}}

\usepackage[accepted]{icml2026}

\usepackage{amsmath}
\usepackage{amssymb}
\usepackage{mathtools}
\usepackage{amsthm}

\usepackage[capitalize,noabbrev]{cleveref}

\theoremstyle{plain}
\newtheorem{theorem}{Theorem}[section]

\newtheorem{lemma}[theorem]{Lemma}
\newtheorem{corollary}[theorem]{Corollary}
\theoremstyle{definition}

\theoremstyle{remark}

\newcommand{\argminF}{\operatorname*{arg\,min}}

\usepackage[textsize=tiny]{todonotes}

\icmltitlerunning{Differentiable Fairness Layers}

\begin{document}

\twocolumn[
  \icmltitle{Differentiable Optimization Layers for Guaranteed Fairness in Deep Learning}



  \icmlsetsymbol{equal}{*}

  \begin{icmlauthorlist}
    \icmlauthor{David Troxell}{xxx}
    \icmlauthor{Noah Roemer}{xxx}
    \icmlauthor{Guido Mont\'ufar}{xxx,yyy,zzz}
  \end{icmlauthorlist}

  \icmlaffiliation{xxx}{Department of Statistics \& Data Science, University of California, Los Angeles, USA}
  \icmlaffiliation{yyy}{Department of Mathematics, University of California, Los Angeles, USA}
  \icmlaffiliation{zzz}{Max Planck Institute for Mathematics in the Sciences, Leipzig, Germany}

  \icmlcorrespondingauthor{David Troxell}{davidtroxell@ucla.edu}

  \icmlkeywords{Fairness, Convex Optimization}

  \vskip 0.3in
]



\printAffiliationsAndNotice{}  

\begin{abstract}
Differentiable optimization layers are traditionally integrated in predict-then-optimize frameworks where a neural model estimates parameters that subsequently serve as fixed inputs to downstream decision-making optimization problems. In this work, 
we introduce the concept of a ``fairness layer'': a differentiable optimization layer appended to a model's output layer that guarantees a chosen notion of output parity is satisfied when integrated into a neural network. Additionally, we introduce an online primal-dual inference algorithm that provides provable aggregate fairness guarantees for streaming predictions with arbitrarily small batch sizes, where traditional per-batch constraints become overly restrictive. Numerical experiments demonstrate the effectiveness of the fairness layer and associated algorithm, and theoretical analysis characterizes the layer's differentiability and stability properties during model training and backpropagation. Our code for these experiments is publicly available on GitHub: \url{https://github.com/dtroxell19/FairDL-ICML-2026.git} and our public Python package documentation can be found online: \url{https://dtroxell19.github.io/fairness_training/}.
\end{abstract}

\section{Introduction}
\subsection{Motivation}
Machine learning systems are increasingly being deployed in high-stakes scenarios including loan applications, hiring, court decisions, clinical diagnoses, and autonomous vehicles \cite{galindo2000credit,van2021machine,berk2017impact,bakator2018deep,gupta2021deep}. Consequently, the design and evaluation of ``fair'' models has become a focal point of research globally. Additionally, regulatory efforts have increased, such as the European Union's 2024 AI Act \cite{EUAIact} and US federal and municipal bias audit laws \cite{NYC,EEOC}. To meet these demands, numerous fairness definitions have been proposed and studied, with this work focusing on ``group-fairness'' constraints that enforce equality of statistical measures across protected groups.

\subsection{Background}\label{sect:background} 

\textbf{Fairness in Deep Learning.}
Methods for mathematically incorporating group-fairness or individual-fairness constraints into deep learning pipelines can be categorized into three groups: pre-processing, in-processing, and post-processing methods \cite{fairnessInMLSurvey}. 

Pre-processing methods modify or augment training data prior to model learning. For example, diffusion models have been employed to augment datasets with realistic medical image scans from underrepresented groups, thus mitigating model bias \cite{ktena2024generative}. 
In-processing methods attempt to ensure fairness during model training via Lagrangian penalty terms in loss functions or adversarial methods to prevent protected attribute encoding in latent representations \citep[for example][]{manisha2020fnnc,delobelle2020ethical}. 
Post-processing methods directly modify predictions made by a network after training. Examples include projecting the original predictions onto a constraint set or post-hoc algorithms reducing accuracy discrepancies across subgroups \cite{pmlr-v108-wei20a,kim2019multiaccuracy}.

Each approach has limitations: pre-processing can amplify bias \cite{Wang2018BalancedDA}, in-processing fails to guarantee constraint satisfaction due to surrogates or soft penalties, and post-processing operates on fairness-unaware models that may not generalize after constraint enforcement.

\textbf{Differentiable Optimization Layers.} 
A differentiable optimization layer (often referred to as a ``declarative node'') is a neural network component whose forward pass is implicitly defined as the solution to an optimization problem rather than explicitly defined by a prescribed activation function \cite{Gould_2021}. Gradients can be computed via the implicit function theorem and KKT conditions \cite{karush39minima,kuhn1951nonlinear}, and efficient implementations exist via disciplined convex programming \cite{grant2006disciplined,agrawal2019differentiable}.

\textbf{End-to-End Modeling.} Declarative nodes enable optimization-based reasoning in ``predict-then-optimize'' frameworks, where neural networks estimate parameters for downstream optimization problems. End-to-end differentiability allows learning decision variables directly from data, with applications in optimal control, portfolio optimization, and energy scheduling \cite{pmlr-v120-agrawal20a,uysal2021endtoendriskbudgetingportfolio,sun2023alternatingdifferentiationoptimizationlayers}.

\subsection{Goals for Advancement in Fair Deep Learning}\label{sect:goals}
We aim to introduce a new component for deep learning architectures that satisfies the following goals: (G1) Verified Fairness: strict compliance to a pre-specified level of fairness; (G2) End-to-End Learning: maintains differentiability through constraints and does not rely on post-hoc adjustments after training; (G3) High Flexibility: compatible with any architecture.

Existing methods satisfy only subsets of these goals (See Appendix~\ref{app:compareTable} for details): pre-processing \cite{rajabi2021towards,Kamiran2012DataPT} and in-processing \cite{oneNetworkAdversary,delobelle2020ethical,manisha2020fnnc} methods achieve (G2) and (G3) but not (G1). Post-processing \cite{pmlr-v108-wei20a,kim2019multiaccuracy} can achieve (G1) and (G3) but not (G2). To the best of 
our knowledge, no prior work simultaneously achieves all three. 

\subsection{Contributions}
\begin{enumerate} 
    \item We introduce the ``fairness layer'': a flexible architectural component guaranteeing fairness while maintaining end-to-end trainability; to the author's knowledge, the first method satisfying (G1), (G2), and (G3) for affine constraint sets. 
    \item We propose an online primal-dual inference algorithm overcoming the fundamental challenge of enforcing fairness with arbitrarily small batch sizes, providing deterministic aggregate guarantees where per-batch constraints are prohibitively restrictive.
    \item We conduct experiments demonstrating improvements over Lagrangian penalties and post-hoc projections. We created the \texttt{fairness\_training} Python package for practitioners interested in the fairness layer.
    \item We provide mathematical analysis of stability and differentiability properties of the fairness layer that motivate its practical utility.
\end{enumerate}

\section{The Fairness Layer}\label{sect:method}

Rather than deploying differentiable optimization layers in a typical predict-then-optimize framework, we propose a differentiable ``fairness layer'' appended to the network's output layer. This fairness layer projects raw outputs onto a fairness constraint set. 

Let $\mathcal{D}_{\text{train}} = \{(x_i,y_i)\}_{i=1}^n$ denote a training dataset with labels $y_i \in \mathbb{R} \ \forall i$ and inputs $x_i \in \mathbb{R}^d \ \forall i$. We use $X = [x_1 , \ldots, x_n]^\top \in 
\mathbb{R}^{n \times d}
$ as shorthand notation for the training dataset of inputs and $x_{ij}$ to denote the $j$-th attribute of observation $i$. Additionally, let $f_\theta(x): \mathbb{R}^{d} \mapsto \mathbb{R}$ denote a neural network such that $f_\theta(x) = (f_L \circ f_{L-1} \circ \dots \circ f_1)(x)$ where the intermediate layers are defined as $f_l(h) = a(W^{(l)} h + b^{(l)}), \ l = 1, \ldots, L-1$ and $a$ is some activation function applied elementwise. For a given input batch $X^{(b)}$  with $n_b$ observations, denote the $j$-th feature vector as $X^{(b)}_{:j}$ and the batched output of the network as $z \triangleq f_\theta(X^{(b)}) = [f_\theta(x_1),\ldots,f_\theta(x_{n_b})]^\top \in \mathbb{R}^{n_b}$. 
For given constraint functions $h_{\text{ineq}} \colon \mathbb{R}^{n_b}\to\mathbb{R}^{q}$ and $h_{\text{eq}} \colon\mathbb{R}^{n_b}\to \mathbb{R}^{v}$ and discrepancy function $\tilde{d}\colon \mathbb{R}^{n_b}\times \mathbb{R}^{n_b}\to\mathbb{R}$, we define the general fairness layer as 
a function mapping $z$ to 
\begin{equation} 
g(z) = \argmin_{\tilde y\in \mathbb{R}^b} \{\tilde{d}(\tilde y, z) \colon h_{\text{ineq}}(\tilde y)\leq 0, h_{\text{eq}}(\tilde y)=0\} . 
    \label{eq:fairnesslayer}
\end{equation}

In this work, we restrict to the case where $d(\tilde{y},z)$ is convex in $\tilde{y}$ and the constraint set is defined by linear equations and linear inequalities $h_{\text{ineq}}(\tilde y) = A\tilde y - m_1$ and $h_{\text{eq}}(\tilde y) = B\tilde y -m_2$ for some $A\in\mathbb{R}^{q\times n_b}, m_1\in\mathbb{R}^{q}$ and $B\in\mathbb{R}^{v\times n_b}, m_2\in\mathbb{R}^{v}$. 
In principle, however, the definitions extend naturally to more general cases. Regardless, the final batched output of the network is denoted as:
\begin{align*}
    \hat{y} = g(f_\theta(X^{(b)})) \in \mathbb{R}^{n_b} . 
\end{align*}
Thus the fairness layer maps a batch of raw predictions to the closest possible batch of predictions satisfying the desired constraints.
Note that the choice of dissimilarity function $\tilde{d}(\cdot,\cdot)$ is left to the modeler. 

Additionally, the constraint set in \eqref{eq:fairnesslayer} must be defined by the modeler to reflect the desired fairness criteria in a given environment. Please see Appendix \ref{sect:Differentiate} for a discussion on differentiation through the fairness layer.

In the context of model fairness, we designate some of the features $j \in [d]$ of an input dataset as ``protected attributes'', which we will assume to be binary-valued, with $1$ denoting membership in the group of interest and $0$ denoting non-membership for a given observation. 
We use $\mathcal{S} \subseteq [d]$ to denote the set of features 
corresponding to protected attributes in the dataset. Importantly, each protected attribute is considered independently; a single observation can belong to multiple protected groups (one per feature) rather than a single group defined jointly by all protected attributes. 
 
Many commonly used notions of fairness for both regression and classification can be formulated as affine constraints of the form given in \eqref{eq:fairnesslayer}. Specifically, we consider fairness criteria that require the equality of expectations of relevant quantities, conditioned on specified regions of the input or output spaces:

\begin{enumerate}
    \item Expected Conditional (or Demographic) Parity \cite{ritov2017conditional}, or Mean (Conditional) Parity \cite{wei2023mean} or Mean (Conditional) Difference \cite{calders2013controlling}: 
\begin{align}
\begin{split} 
    -\epsilon \leq 
    &\mathbb{E}[\tilde{y}|X_{:j^*}=0, X_{:j}\in\mathcal{R}_j] \quad  
    \\
    &
    - \mathbb{E}[\tilde{y}|X_{:j^*}=1,X_{:j}\in\mathcal{R}_j] \leq \epsilon 
    \\
    &\qquad \qquad \forall \ j^* \in \mathcal{S}   \ \text{and } j\in[d] \setminus \mathcal{S} , 
    \label{eq: MSPrelaxeq}
    \end{split} 
\end{align}
where $\mathcal{R}_j$ specifies the considered range of the $j$-th feature of the observation, and $\epsilon$ is a small tolerance threshold. This notion of fairness ensures that, given some input feature values are in similar ranges, the average rating or prediction for two groups is similar. 

\item Expected Equalized Residuals \cite{grari2019fairness}:
\begin{align}
\begin{split} 
    -\epsilon \leq & \mathbb{E}[\tilde{y}-y|X_{:j}=0] 
    \\
   & - \mathbb{E}[\tilde{y}-y|X_{:j}=1]  \leq \epsilon \quad \forall j \in \mathcal{S} . 
    \label{eq: equalizedres}
    \end{split} 
\end{align}
This notion of fairness ensures that the model overestimates or underestimates each group's outcome in a similar fashion on average. 

\item Expected Equalized Odds \cite{hardt2016equality}:
\begin{align}
\begin{split} 
    -\varepsilon \leq & \mathbb{E}[\tilde{y} | y \in \mathcal{I}_i, X_{:j} = 0] \quad  \label{eq: eqodds}\\
    & - \mathbb{E}[\tilde{y} | y \in \mathcal{I}_i, X_{:j} = 1] \leq \varepsilon  \\
    & \qquad \qquad  \forall j \in \mathcal{S}   \ \text{and} \ \forall \  i  . 
    \end{split} 
\end{align}
Here $\mathcal{I}_i$ represents a region of the output space. 
In the case of binary classification, for instance, $\mathcal{I}_0 = \{0\}$ and $\mathcal{I}_1 = \{1\}$. This notion of fairness ensures that individuals with similar target values are treated similarly across groups. 

\end{enumerate}

As discussed in 
Section~\ref{sect:algs}, 
the expectations can be approximated via sample means over minibatches. Note that in classification settings, many classical notions of fairness such as demographic parity and equalized odds are computed using hard model assignments, resulting in nonconvex formulations (see Appendix \ref{app:AppA} for details). In this work, we replace the nonconvex entities with expectations, yielding convex constraints. While these expectation-based constraints can be used as tractable proxies for classical notions of fairness in classification settings, we note that expectation-based constraints often align exactly with established desideratum in regression, scoring, and continuous prediction settings, and can also be applied to pre-thresholded classification logits. Additionally, although this work emphasizes constraints for fairness, the general fairness layer formulation \eqref{eq:fairnesslayer} also naturally accommodates other constraints not related to fairness (see Appendix \ref{app:AppB} for examples).

\section{Algorithms and Minibatches}
\label{sect:algs}\label{sec:inference_primal_dual}

Ultimately, the goal is to obtain a model whose inference predictions satisfy population-level constraints. In practice, however, models are deployed with finite batch sizes. 
Because the output of the fairness layer $g(\cdot)$ is batch-dependent, the choices of batch sizes for training $b_{\text{train}}$ and inference $b_{\text{infer}}$ have large implications in the fairness setting. Let $\mathcal{F}(\hat{y}_{b,i}; X^{(b,i)})$ denote an affine statistic computed over the predictions 
and features for group $i \in \{0,1\}$ within minibatch $b$ (e.g., the sample mean prediction 
$\frac{1}{n_{b,i}} \sum_{j : x_{jk}=i} \hat{y}_j$ for protected attribute $k \in \mathcal{S}$). 
We use $\mathcal{F}_{b,i}$ as shorthand. The fairness constraints enumerated in \eqref{eq: MSPrelaxeq}--\eqref{eq: eqodds} can then be expressed as bounds on the gap between group statistics: $|\mathcal{F}_0 - \mathcal{F}_1| \leq \epsilon$. Define the weighted fairness violation as 
\[
w_b(\hat{y}_b) := n_b \cdot \bigl( |\mathcal{F}_0(\hat{y}_b; X^{(b)}) - \mathcal{F}_1(\hat{y}_b; X^{(b)})| - \epsilon \bigr) . 
\]
The following lemma details the conditions when enforcing fairness per mini-batch results in fairness constraints being satisfied across all samples in aggregate:

\begin{lemma}[Aggregate Fairness Under Varying Group Proportions]
\label{thm:varying_proportions}
Given $B$ batches of input data, let $\mathcal{F}_{b,i} = \mathcal{F}(\hat{y}_{b,i}; X^{(b,i)})$ denote an affine fairness statistic for group $i \in \{0,1\}$ in batch $b \in [B]$. Let $n_{b,i}$ denote the number of samples from group $i$ in batch $b$.

Suppose each batch $b$ satisfies the per-batch fairness constraint:
\[
|\mathcal{F}_{b,0} - \mathcal{F}_{b,1}| \le \epsilon . 
\]

Let $p_b = \frac{n_{b,0}}{n_{b,0} + n_{b,1}}$ denote the proportion of group 0 in batch $b$, and let $\bar{p} = \frac{\sum_b n_{b,0}}{\sum_b (n_{b,0} + n_{b,1})}$ denote the overall proportion of group 0 across all batches.

Define the aggregate fairness statistics:
\[
\mathcal{F}_0 = \frac{\sum_b n_{b,0} \mathcal{F}_{b,0}}{\sum_b n_{b,0}}, \quad
\mathcal{F}_1 = \frac{\sum_b n_{b,1} \mathcal{F}_{b,1}}{\sum_b n_{b,1}} . 
\]

Then the aggregate fairness satisfies:
\[
|\mathcal{F}_0 - \mathcal{F}_1| \le \epsilon + \Delta_p \cdot R \left(\frac{1}{\bar{p}} + \frac{1}{1-\bar{p}}\right) , 
\]
where:
\begin{itemize}
\item $\Delta_p = \max_b |p_b - \bar{p}|$ measures the maximum deviation of batch group proportions from the overall proportion. 
\item $R = \max_{b,i} |\mathcal{F}_{b,i}|$ is the maximum absolute fairness statistic value across all batches and groups. 
\end{itemize}
\end{lemma}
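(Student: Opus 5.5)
The plan is to write both aggregate statistics as weighted averages over batches with a common set of batch weights, and then isolate the per-batch gaps plus two correction terms driven by the proportion deviations $p_b-\bar p$. Let $n_b = n_{b,0}+n_{b,1}$, $N=\sum_b n_b$, and $\alpha_b = n_b/N$, so that $\alpha_b\ge 0$ and $\sum_b\alpha_b = 1$. We implicitly assume $0<\bar p<1$, i.e.\ both groups appear somewhere; otherwise the bound is vacuous or undefined. Since $n_{b,0}=p_b n_b$ and $\sum_b n_{b,0}=\bar p N$, the group-0 weights become
\[
\frac{n_{b,0}}{\sum_{b'} n_{b',0}} = \alpha_b\,\frac{p_b}{\bar p}.
\]
Similarly, the group-1 weights become
\[
\frac{n_{b,1}}{\sum_{b'} n_{b',1}} = \alpha_b\,\frac{1-p_b}{1-\bar p}.
\]

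The key step is the algebraic identity
\[
\frac{p_b}{\bar p} = 1 + \frac{p_b-\bar p}{\bar p}
\]
together with
\[
\frac{1-p_b}{1-\bar p} = 1 - \frac{p_b-\bar p}{1-\bar p}.
\]
Substituting these into $\mathcal{F}_0-\mathcal{F}_1$ yields the decomposition
\[
\mathcal{F}_0-\mathcal{F}_1 = \sum_b \alpha_b\bigl(\mathcal{F}_{b,0}-\mathcal{F}_{b,1}\bigr) + \sum_b \alpha_b\,\frac{p_b-\bar p}{\bar p}\,\mathcal{F}_{b,0} + \sum_b \alpha_b\,\frac{p_b-\bar p}{1-\bar p}\,\mathcal{F}_{b,1}.
\]
Verifying the sign of the third term is the only delicate point. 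Subtracting the group-1 sum produces a minus in front of the $-\frac{p_b-\bar p}{1-\bar p}$ correction, which becomes $+$. I expect this bookkeeping to be the main (and essentially only) obstacle.

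Finally, apply the triangle inequality term by term. The first sum is a convex combination of quantities bounded in absolute value by $\epsilon$ under the per-batch hypothesis, so it is at most $\epsilon$. In the second sum, use $|p_b-\bar p|\le\Delta_p$, $|\mathcal{F}_{b,0}|\le R$ and $\sum_b\alpha_b=1$; this gives at most $\Delta_p R/\bar p$. The third sum is bounded in the same way by $\Delta_p R/(1-\bar p)$. Adding the three bounds gives the claimed inequality. Note that affinity of $\mathcal{F}$ is only used to justify that the aggregate statistic is the sample-size-weighted average of the batch statistics, which is how $\mathcal{F}_0,\mathcal{F}_1$ are defined in the statement.
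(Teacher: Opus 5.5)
Your proposal is correct and is essentially the paper's proof. Your three sums are exactly $\tilde{\mathcal{F}}_0-\tilde{\mathcal{F}}_1$, $\mathcal{F}_0-\tilde{\mathcal{F}}_0$ and $\tilde{\mathcal{F}}_1-\mathcal{F}_1$ for the paper's equal-proportion baselines $\tilde{\mathcal{F}}_i=\frac{1}{N}\sum_b n_b\mathcal{F}_{b,i}$, bounded by $\epsilon$, $\Delta_p R/\bar p$ and $\Delta_p R/(1-\bar p)$ in the same way; the only difference is that you write the split as an exact identity (with the sign of the third term handled correctly) before applying the triangle inequality, whereas the paper inserts these intermediate quantities directly into the triangle inequality.
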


The proof is given in Appendix \ref{sect:prooflemm1}. In practice, stratified sampling can enforce consistent group proportions across minibatches, in which case the lemma guarantees satisfaction of constraints with target tolerance $\epsilon$. However, in real-time or streaming prediction settings, maintaining consistent group membership across batches may be infeasible. While $\Delta_p$ vanishes with large batch sizes, small inference batches introduce challenges: even when constant group proportions can be enforced, minibatch-level fairness constraints may overly restrict model expressivity. To address these limitations, we propose an online primal-dual optimization algorithm that permits individual minibatches to violate fairness constraints while guaranteeing aggregate fairness over time.

At each inference step in the small-batch inference regime, predictions are obtained by solving a differentiable constrained optimization problem whose objective includes a dynamically updated fairness penalty. Let $t = 1,2,\dots,T$ index time such that, at time $t$, batch $b_t$ arrives for inference. The model produces predictions $f_\theta(X^{(b_t)}) =\hat{y}_{b_t} \in \mathbb{R}^{|b_t|}$ where $|b_t|$ is used as shorthand notion to denote the size of batch $b_t$. We set $b_{\text{infer}} = \max_t |b_t|$.

Given a target fairness level $\epsilon > 0$, define the batch-level fairness gap and violation, respectively, as:
\begin{align*}
v_t(\hat{y}_{b_t}) &:= |\mathcal{F}_{b_t, 0} - \mathcal{F}_{b_t, 1} | ,\\
\tilde{v}_t(\hat{y}_{b_t}) &:= v_t(\hat{y}_{b_t}) - \epsilon. 
\end{align*}
We therefore define the weighted violation as follows:
\[
w_t(\hat{y}_{b_t}) \triangleq |b_t| \cdot \tilde{v}_t(\hat{y}_{b_t}) . 
\]
Our objective is to satisfy the global fairness constraint
\begin{equation}
\sum_{t=1}^T w_t(\hat{y}_{b_t}) \le 0 \ \Leftrightarrow \frac{1}{\sum_{t=1}^T |b_t|}
\sum_{t=1}^T |b_t| v(t) \le \epsilon . 
\label{eq:global_constraint}
\end{equation}

\begin{algorithm}[ht]
\caption{Primal-Dual Constraint-Aware Inference for Fairness}
\label{alg:inference_primal_dual}
\begin{algorithmic}
\STATE \textbf{Input:} Trained model $f_\theta$, fairness layer $g$, stream of inference batches $\{X^{(b_t)}\} \ \forall t \in [T]$, fairness tolerance $\epsilon$, dual step size $\eta$, batch threshold $b_\tau$
\STATE \textbf{Initialize:} Dual variable $\lambda \leftarrow 0$, dual update count $t \leftarrow 0$
\FOR{each inference batch $X^{(b_t)}$ of size $b_{\text{infer}}$}
    \STATE Compute raw predictions: $\hat{y}_{\text{raw}} \leftarrow f_\theta(X^{(b_t)})$
    \IF{$b_{\text{infer}} < b_\tau$}
        \STATE Compute adaptive step size: $\eta_t \leftarrow \eta / \sqrt{t+1}$
        \STATE \textbf{Primal update (primal-dual projection):}
        \begin{equation}
        \hat{y}_{b_t}
        \leftarrow
        \arg\min_{\hat{y}}
        \left[
        \|\hat{y} - \hat{y}_{\text{raw}}\|^2
        +
        \lambda \cdot b_{\text{infer}} \cdot v(\hat{y})
        \right] \label{eq: primalUpdate}
        \end{equation}

        \STATE Compute fairness violation:
        \[
        v_{b_t} \leftarrow b_{\text{infer}} \cdot \big( v(\hat{y}_{b_t}) - \epsilon \big)
        \]
        \STATE \textbf{Dual update:}
        \[
        \lambda \leftarrow \max\{0, \lambda + \eta_t w_b\}
        \]
        \STATE $t \leftarrow t + 1$
    \ELSE
        \STATE \textbf{Hard constraint projection (per-batch fairness):}
        \begin{equation}
        \hat{y}_{b_t} \leftarrow \arg\min_{\hat{y}} \|\hat{y} - \hat{y}_{\text{raw}}\|^2 
        \quad \text{s.t.} \quad v(\hat{y}) \le \epsilon \label{eq: dualUpdate}
        \end{equation} 
    \ENDIF
    \STATE \textbf{Output:} Return predictions $\hat{y}_{b_t}$ for batch $X^{(b_t)}$
\ENDFOR
\end{algorithmic}
\end{algorithm}

Let $b_\tau$ denote a threshold batch size above which stratified sampling is feasible; that is, one can form mini-batches that have at least one observation per protected group, and group composition is constant across minibatches. In Section \ref{sect:experiments}, $b_\tau = 2000$ and $b_\tau = 256$ are used, but this choice is left to the modeler. Algorithm~\ref{alg:inference_primal_dual} enforces fairness as a cumulative constraint across all inference batches. While individual batches---particularly when $b_{\text{infer}}$ is 
small---may violate fairness constraints, Theorem~\ref{thm:online} guarantees that the 
sample-weighted average violation converges to at most $\epsilon$ as the number of inference 
batches grows. The following theorem formalizes this guarantee:

\begin{theorem}[Aggregate Fairness in Small-Batch Regime]
\label{thm:online}
Let $\ell_t(\hat{y}_{b_t}, y_{b_t})$ denote a differentiable loss function convex in $\hat{y}_{b_t}$ incurred at time $t$ for all $t$, and assume there exists a sequence $\{\tilde{y}_{b_t}\}_{t=1}^\infty$ such that $w_t(\tilde{y}_{b_t}) \le 0$ for all $t$. Then, the sequence $\{\hat{y}_{b_t}\}$ produced by \eqref{eq: primalUpdate}--\eqref{eq: dualUpdate} satisfies
\[
\limsup_{T \to \infty}
\frac{1}{\sum_{t=1}^T |b_t|}
\sum_{t=1}^T
|b_t| v(\hat{y}_{b_t})
\le \epsilon . 
\]
\end{theorem}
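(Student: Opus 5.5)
Our plan is the standard online primal-dual (Lagrangian) argument: the dual variable $\lambda_t$ acts as a running ledger of accumulated violation, and boundedness of this ledger relative to the step sizes forces the average violation to vanish. We restrict to the small-batch branch $b_{\text{infer}}<b_\tau$, since batches handled by the hard projection \eqref{eq: dualUpdate} satisfy $w_t\le 0$ deterministically and can only help the aggregate sum. Throughout we write $\lambda_t$ for the dual iterate before step $t$, $w_t:=w_t(\hat y_{b_t})$, and use that the batch sizes are bounded ($1\le|b_t|\le b_{\text{infer}}$), so the normalizations $\sum_t|b_t|$ and $T$ are comparable.

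The first step is the dual bookkeeping inequality. From the projected update $\lambda_{t+1}=\max\{0,\lambda_t+\eta_t w_t\}\ge \lambda_t+\eta_t w_t$ we get $w_t\le(\lambda_{t+1}-\lambda_t)/\eta_t$. Summing and applying Abel summation with the nonincreasing step sizes $\eta_t=\eta/\sqrt{t+1}$ gives
\[
\sum_{t=1}^T w_t \le \frac{\lambda_{T+1}}{\eta_T}+\sum_{t=2}^{T}\lambda_t\Bigl(\frac{1}{\eta_{t-1}}-\frac{1}{\eta_t}\Bigr)\le \frac{\lambda_{T+1}}{\eta_T},
\]
where the second inequality holds because $1/\eta_t$ is increasing and $\lambda_t\ge 0$. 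If we can show $\lambda_{T+1}=O(1)$, or even $o(1)$ times $\sqrt T$ growth, then the right-hand side is $O(\sqrt T)$. Dividing by $\sum_t|b_t|\ge T$ gives $\frac{1}{\sum|b_t|}\sum|b_t|(v_t-\epsilon)\le O(1/\sqrt T)\to 0$, which is exactly the $\limsup$ claim.

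The second and hardest step is bounding $\lambda_t$, and here the feasibility assumption enters. The primal step \eqref{eq: primalUpdate} minimizes $\|\hat y-\hat y_{\text{raw}}\|^2+\lambda b\, v(\hat y)$. Comparing its value at the minimizer with its value at the feasible $\tilde y_{b_t}$ gives
\[
\lambda_t w_t \le \|\tilde y_{b_t}-\hat y_{\text{raw}}\|^2-\|\hat y_{b_t}-\hat y_{\text{raw}}\|^2+\lambda_t w_t(\tilde y_{b_t})\le D^2 ,
\]
where $D$ bounds $\|\tilde y_{b_t}-\hat y_{\text{raw}}\|$. This requires an explicit boundedness assumption: bounded raw predictions and bounded comparator sequence, which we will add. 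Consequently, whenever $\lambda_t>D^2/c$, every step with $w_t>0$ has $w_t\le D^2/\lambda_t$, and the increment $\eta_t w_t$ is small. For an affine statistic, once $\lambda_t$ exceeds $2D\cdot(\text{Lipschitz constant of } v)$, the penalized minimizer in fact lands at $v\le\epsilon$, or within $O(1/\lambda_t)$ of it. Steps with $w_t\le 0$ only decrease $\lambda$. We therefore expect $\lambda_t\le \Lambda+\eta W$, with $W$ a uniform bound on $|w_t|$ (again finite under boundedness). An alternative route, if a uniform bound is awkward, is the drift argument: $\lambda_{t+1}^2\le\lambda_t^2+2\eta_t\lambda_t w_t+\eta_t^2w_t^2\le \lambda_t^2+2\eta_tD^2+\eta_t^2W^2$. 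This yields $\lambda_{T+1}^2=O(\sqrt T)$, i.e.\ $\lambda_{T+1}=O(T^{1/4})$, and then $\lambda_{T+1}/\eta_T=O(T^{3/4})=o(T)$, which still suffices.

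The main obstacle is that the theorem as stated lacks the boundedness hypotheses needed for $D$ and $W$, and that the penalty uses $v$ rather than $v-\epsilon$; the latter only shifts the objective by a constant in $\hat y$, so it is harmless. We will make the boundedness explicit (bounded network outputs, and comparators chosen as projections of the raw outputs onto the feasible set) and then run the drift argument above, since it avoids any fine analysis of the primal minimizer.
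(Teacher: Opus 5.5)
Your committed route is exactly the paper's proof: comparing the primal minimizer with the feasible comparator to get $\lambda_t w_t \le D^2$, the squared-drift bound $\lambda_{T+1}^2 \le 2D^2\sum_t \eta_t + W^2 \sum_t \eta_t^2 = O(\sqrt{T})$, and Abel summation with $\lambda_1 = 0$ to get $\sum_t w_t \le \lambda_{T+1}/\eta_T = O(T^{3/4}) = o(T)$, all under the same boundedness of raw predictions and of $|w_t|$ that the paper assumes only informally (``predictions are bounded in practice''). The uniform-bound aside for $\lambda_t$ is not needed, and its exact-penalty threshold is mis-scaled: it should be roughly $2D$ divided by, not multiplied by, the Lipschitz constant of $b_{\text{infer}}\,v$ (for example, $\min_y (y-r)^2 + c\,a|y|$ reaches $y=0$ only when $c \ge 2r/a$), so you are right to rely on the drift argument instead.
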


The proof is detailed in Appendix \ref{sect:onlineProof}. In practice, this result implies that arbitrarily small batch sizes can be used during inference while controlling the batch-size-weighted time-average of fairness violations, provided sufficiently many batches are processed. The bound is asymptotic, so more inference batches yield a tighter approximation to the target fairness level $\epsilon$. When group proportions are stable across batches, this time-average coincides with the pooled aggregate fairness gap.

\section{Properties of the Fairness Layer}
While some structural properties of differentiable convex optimization layers have been established in other works  \citep[e.g.,][]{Gould_2021}, our aim in this section is to present results tailored to the context of fairness and the definition of the fairness layer described in Section \ref{sect:method}. By articulating the correspondence between active constraint patterns, affine regions, and the resulting backward pass, we aim to provide an analysis that is both more transparent and more directly interpretable 
in context of the fairness layer. 

\begin{theorem}[Differentiability of the fairness layer]
\label{thm:differentiability}
Let $\mathcal{C} \subset \mathbb{R}^n$ be a nonempty, closed, convex set.
Let $h:\mathbb{R}^n \to \mathbb{R}$ be $\mu$-strongly convex for some $\mu>0$.
Define
\[
    g(z) := \arg\min_{\tilde{y} \in \mathcal{C}} \big\{\, h(\tilde{y}) - \langle z, \tilde{y} \rangle \,\big\}, \quad z \in \mathbb{R}^n . 
\]
Then for every $z \in \mathbb{R}^n$, 
$g$ is globally $1/\mu$-Lipschitz continuous and differentiable almost everywhere.
\end{theorem}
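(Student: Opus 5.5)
First, $g$ is well defined. For each $z$ the objective $\tilde y\mapsto h(\tilde y)-\langle z,\tilde y\rangle$ is still $\mu$-strongly convex, since subtracting a linear term does not change strong convexity. A $\mu$-strongly convex function is coercive. If $h$ is finite-valued on $\mathbb{R}^n$ it is also continuous, so it attains a minimum on the nonempty closed set $\mathcal{C}$, and strict convexity makes that minimizer unique. Hence $g(z)$ is a single point for every $z\in\mathbb{R}^n$.

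The Lipschitz bound comes from first-order optimality plus strong monotonicity. Write $\iota_{\mathcal{C}}$ for the indicator of $\mathcal{C}$ and $N_{\mathcal{C}}$ for its normal cone. Optimality of $y_1=g(z_1)$ means $0\in\partial h(y_1)-z_1+N_{\mathcal{C}}(y_1)$, i.e.\ $z_1\in\partial(h+\iota_{\mathcal{C}})(y_1)$. The sum rule for subdifferentials applies because $h$ is finite everywhere. The function $F:=h+\iota_{\mathcal{C}}$ is $\mu$-strongly convex, so its subdifferential is $\mu$-strongly monotone:
\[
\langle u_1-u_2,\,y_1-y_2\rangle\ge \mu\|y_1-y_2\|^2 \quad\text{for } u_i\in\partial F(y_i).
\]
Take $u_i=z_i$ and apply Cauchy--Schwarz to get $\mu\|y_1-y_2\|^2\le\|z_1-z_2\|\,\|y_1-y_2\|$, hence $\|g(z_1)-g(z_2)\|\le \tfrac1\mu\|z_1-z_2\|$.

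If I want to avoid subdifferential calculus, I would argue directly from strong convexity. Let $\phi_i(y)=h(y)-\langle z_i,y\rangle$, minimized over $\mathcal{C}$ at $y_i$. Strong convexity with optimality gives the quadratic-growth inequality
\[
\phi_i(y)\ge\phi_i(y_i)+\tfrac{\mu}{2}\|y-y_i\|^2 \quad\text{for all } y\in\mathcal{C}.
\]
To prove it, use convexity of $\mathcal{C}$ to compare $\phi_i$ at $y_i$ and at $y_i+s(y-y_i)$, apply the strong convexity inequality, divide by $s$, and let $s\to0$. Adding the two inequalities, with $y=y_2$ in the first and $y=y_1$ in the second, the $h$ terms cancel and leave
\[
\mu\|y_1-y_2\|^2\le\langle z_1-z_2,\,y_1-y_2\rangle.
\]
This is the same conclusion. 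I regard this step as the main technical point to state carefully, since $h$ is not assumed differentiable.

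Differentiability almost everywhere then follows from Rademacher's theorem: a Lipschitz map $\mathbb{R}^n\to\mathbb{R}^n$ is Fréchet differentiable outside a Lebesgue-null set. The phrase ``for every $z$'' in the statement applies to the Lipschitz property and to well-definedness. The a.e.\ claim is global, and I would say so explicitly. For the fairness setting, $h=\tfrac12\|\cdot\|^2$ and $\mathcal{C}$ is polyhedral, so $g$ is the Euclidean projection $\Pi_{\mathcal{C}}$. Then $\mu=1$, and $g$ is piecewise affine, which previews the later active-set discussion. This remark is not needed for the proof.
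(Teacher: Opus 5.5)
Your proposal is correct and follows essentially the same route as the paper. Existence and uniqueness come from coercivity plus Weierstrass. The Lipschitz bound comes from the optimality condition $z\in\partial h(g(z))+N_{\mathcal{C}}(g(z))$ together with strong monotonicity and Cauchy--Schwarz: the paper splits this into strong monotonicity of $\partial h$ plus monotonicity of $N_{\mathcal{C}}$, while you bundle the two into $\partial(h+\iota_{\mathcal{C}})$. Differentiability almost everywhere then follows from Rademacher's theorem. Your quadratic-growth alternative is a valid subgradient-free derivation of the same inequality $\mu\|y_1-y_2\|^2\le\langle z_1-z_2,\,y_1-y_2\rangle$, and it cleanly sidesteps any subdifferential calculus for the possibly nondifferentiable $h$.
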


Note that this objective function is a special case of the general discrepancy function \eqref{eq:fairnesslayer}. However, this form still encapsulates many common choices, such as the Euclidean discrepancy function $\tilde{d}(\tilde{y},z) = \|\tilde{y}-z\|_2^2$. 

Theorem~\ref{thm:differentiability} details that the inclusion of the fairness component in a neural network architecture will not disrupt backpropagation as the gradients are well-defined since the fairness component $g(z)$ is differentiable almost everywhere. However, despite being well defined, unstable or large gradients can cause gradient explosions or instability in training. The following corollary states that for certain choices of $h$, the inclusion of the fairness layer will certifiably not cause exploding gradients during model training.

\begin{corollary}[Gradient Stability for Fairness Layers]\label{thm:corr}

Let $g(z)$ be the fairness layer defined in  Theorem~\ref{thm:differentiability}: 
\begin{align*}
    g(z) := \argminF_{\tilde{y} \in \mathcal{C}} \left\{h(\tilde{y}) - \langle z, \tilde{y} \rangle\right\}, 
\end{align*}
where we assume that $h$ is $\mu$-strongly convex. 
Then, $\|\mathbf{D}g(z)\|_2 \leq \frac{1}{\mu}$ almost everywhere, where $\mathbf{D}g(z)$ denotes the Jacobian matrix. Additionally, for any differentiable upstream map $f$:
\begin{equation*}
\|\nabla_X(g \circ f)(X)\|_2 \leq \frac{1}{\mu}\|\nabla_X f(X)\|_2 . 
\end{equation*}
\end{corollary}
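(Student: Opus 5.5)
The plan is to obtain the Jacobian bound directly from the global Lipschitz property in Theorem~\ref{thm:differentiability}, and then get the composition bound from the chain rule together with submultiplicativity of the operator norm.

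\textbf{Step 1 (Jacobian bound).} By Theorem~\ref{thm:differentiability}, $g$ is $1/\mu$-Lipschitz on $\mathbb{R}^n$ and differentiable almost everywhere; Rademacher's theorem also gives this. Fix a point $z$ where $g$ is differentiable and any unit vector $u$. By definition of the derivative,
\[
\mathbf{D}g(z)u=\lim_{s\to 0^+}\frac{g(z+su)-g(z)}{s}.
\]
Each difference quotient has norm at most $\frac{1}{s}\cdot\frac{1}{\mu}\|su\|_2=\frac{1}{\mu}$. Passing to the limit gives $\|\mathbf{D}g(z)u\|_2\le 1/\mu$. Taking the supremum over unit vectors $u$ yields $\|\mathbf{D}g(z)\|_2\le 1/\mu$ at every differentiability point, hence almost everywhere.

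\textbf{Step 2 (composition).} At any $X$ where $g$ is differentiable at $f(X)$, the chain rule gives $\nabla_X(g\circ f)(X)=\mathbf{D}g(f(X))\,\nabla_X f(X)$. Submultiplicativity of the spectral norm then gives
\[
\|\nabla_X(g\circ f)(X)\|_2\le \|\mathbf{D}g(f(X))\|_2\,\|\nabla_X f(X)\|_2\le \tfrac{1}{\mu}\|\nabla_X f(X)\|_2 .
\]

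\textbf{Main obstacle.} The delicate point is the measure-theoretic quantifier. The non-differentiability set $N$ of $g$ is Lebesgue-null in $z$-space, but its preimage $f^{-1}(N)$ need not be null in $X$-space. A constant $f$ whose value lies in $N$ is an example.

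I would handle this in one of two ways.
\begin{itemize}
\item The first option is to state the inequality at all $X$ with $f(X)\notin N$. This is the set where the chain-rule expression is meaningful, and it matches how autodiff evaluates the backward pass.
\item The second option covers the remaining points. Since $g\circ f$ is locally Lipschitz with local constant at most $\frac{1}{\mu}\cdot\operatorname{Lip}(f)$, at every point where $g\circ f$ is differentiable one can apply the Step 1 argument directly to $g\circ f$. Bounding $\|g(f(X+su))-g(f(X))\|_2\le\frac{1}{\mu}\|f(X+su)-f(X)\|_2$, dividing by $s$ and letting $s\to0^+$ gives $\|\nabla_X(g\circ f)(X)u\|_2\le\frac{1}{\mu}\|\nabla_X f(X)u\|_2\le\frac1\mu\|\nabla_Xf(X)\|_2$. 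This holds without any assumption on whether $f(X)\in N$.
\end{itemize}

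The second route is cleaner and yields the composite bound wherever $g\circ f$ is differentiable. It uses only the Lipschitz property of $g$ and the differentiability of $f$, and never needs $\mathbf{D}g$ at $f(X)$.
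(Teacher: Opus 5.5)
Your Steps 1 and 2 are the paper's proof. The paper also passes from the $1/\mu$-Lipschitz bound of Theorem~\ref{thm:differentiability} to $\|\mathbf{D}g(z)\|_2\le 1/\mu$ at every differentiability point. It only asserts this implication, whereas you verify it with difference quotients. It then applies the chain rule $\nabla_X(g\circ f)(X)=\mathbf{D}g(f(X))\,\nabla_X f(X)$ together with submultiplicativity.

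The real difference is how you handle the quantifier, which the paper skips. Its chain-rule step assumes that $g$ is differentiable at $f(X)$. The set of $X$ where this fails need not be null, and the failure need not be degenerate. For example, take $\mathcal{C}=\{y: y_1\le 0\}$ and $f(x)=(0,x)$. The entire image lies on the hyperplane where $g$ has a kink, yet $g\circ f=f$ is smooth with $\|\nabla(g\circ f)\|_2=\|\nabla f\|_2$.

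Your second route bounds $\|g(f(X+su))-g(f(X))\|_2\le\frac{1}{\mu}\|f(X+su)-f(X)\|_2$ and lets $s\to0^+$. This proves the composite inequality at every point where $g\circ f$ is differentiable, and it never needs $\mathbf{D}g$. It is therefore strictly more general than the paper's argument at almost no extra cost; the paper's version is only shorter.

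One small fix. A merely differentiable $f$ need not be locally Lipschitz, so your claim that $g\circ f$ is locally Lipschitz needs $f$ to be locally Lipschitz, for instance $f\in C^1$. Alternatively, use Stepanov's theorem: at every $X$, $\limsup_{h\to0}\|g(f(X+h))-g(f(X))\|_2/\|h\|_2\le\frac1\mu\|\nabla_X f(X)\|_2<\infty$, so $g\circ f$ is differentiable almost everywhere. The inequality itself does not depend on this point.
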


These results suggest that the differentiable optimization layer is well-suited for model training in practical settings. 
    
\begin{theorem}[Structure of the Fairness Layer]
\label{thm:piecewise-smooth}
Let $g(z)$ be the fairness layer from Theorem~\ref{thm:differentiability} with $h(y) = \frac{1}{2}\|y\|_2^2$: 
\begin{equation}
    g(z) = \argminF_{y \in \mathcal{C}} \frac{1}{2}\|y - z\|_2^2, \label{eq:specific_fairness}
\end{equation}
where $\mathcal{C} = \{y \in \mathbb{R}^n : Ay \leq b\}$ with $A \in \mathbb{R}^{m \times n}$ and $b \in \mathbb{R}^m$. For any $z \in \mathbb{R}^n$, define the index set of active constraints:
\begin{equation*}
    \mathcal{A}(z) = \{i \in \{1,\ldots,m\} : (Ag(z))_i = b_i\},
\end{equation*} 
and let $A_{\mathcal{A}} \in \mathbb{R}^{|\mathcal{A}| \times n}$ denote a matrix constructed with rows of $A$ indexed by $\mathcal{A}(z)$. For any $z \in \mathbb{R}^n$, let $\lambda(z) \in \mathbb{R}^m$ denote the vector of KKT multipliers satisfying the optimality conditions:
\begin{align}
\begin{split} 
    g(z) - z + &A^\top\lambda(z) = 0, \quad \lambda_i(z) \geq 0, 
    \\ &\lambda_i(z)(Ag(z) - b)_i = 0 . 
    \label{eq:kkt}
    \end{split} 
\end{align}
Then:
\begin{enumerate}
    \item Partition: The domain $\mathbb{R}^n$ can be partitioned into finitely many polyhedral regions $\{R_{\mathcal{I}}\}_{\mathcal{I} \subseteq \{1,\ldots,m\}}$,  
    \begin{equation*}
        R_{\mathcal{I}} = \{z \in \mathbb{R}^n : \mathcal{A}(z) = \mathcal{I} \text{ and } \lambda_i(z) > 0 \ \forall i \in \mathcal{I}\}
    \end{equation*}
    
    \item Local structure: On each polyhedral region $R_{\mathcal{I}}$, the function $g$ is affine:
    \begin{equation*}
        g(z) = P_{\mathcal{I}} z + c_{\mathcal{I}},
    \end{equation*}
    where $P_{\mathcal{I}} = I - A_{\mathcal{I}}^\top(A_{\mathcal{I}}A_{\mathcal{I}}^\top)^+A_{\mathcal{I}}$ and $c_{\mathcal{I}} = A_{\mathcal{I}}^\top(A_{\mathcal{I}}A_{\mathcal{I}}^\top)^+b_{\mathcal{I}}$ is a constant vector. 
    
    \item Measure zero boundaries: The boundary between any two regions $R_{\mathcal{I}}$ and $R_{\mathcal{J}}$ ($\mathcal{I} \neq \mathcal{J}$) has Lebesgue measure zero in $\mathbb{R}^n$.

\end{enumerate}
\end{theorem}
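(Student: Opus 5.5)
The proof is organized around the KKT system \eqref{eq:kkt}, which characterizes $g(z)$ completely. The objective $\frac12\|y-z\|_2^2$ is strongly convex and $\mathcal{C}$ is polyhedral, so linear constraints qualify automatically. Hence $g(z)$ is unique, and multipliers $\lambda(z)$ exist for every $z$. The multipliers need not be unique when the active rows are linearly dependent. To make $R_{\mathcal{I}}$ well defined, I read the condition as ``there exists a KKT multiplier with $\lambda_i>0$ for $i\in\mathcal{I}$'', and I choose $\mathcal{I}$ as the support of such a multiplier. The sets $R_{\mathcal I}$, together with the leftover degenerate points, then cover $\mathbb{R}^n$, and there are at most $2^m$ of them, which gives finiteness.

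\textbf{Local structure.} Fix $z\in R_{\mathcal I}$ and write $y=g(z)$. Complementary slackness forces $\lambda_i=0$ for $i\notin\mathcal I$, so stationarity reads $y = z - A_{\mathcal I}^\top\lambda_{\mathcal I}$. Activity gives $A_{\mathcal I}y=b_{\mathcal I}$, hence $A_{\mathcal I}A_{\mathcal I}^\top\lambda_{\mathcal I}=A_{\mathcal I}z-b_{\mathcal I}$. This system is consistent, because $A_{\mathcal I}z-b_{\mathcal I}=A_{\mathcal I}(z-y)\in\operatorname{range}(A_{\mathcal I}A_{\mathcal I}^\top)$. Consequently $A_{\mathcal I}^\top\lambda_{\mathcal I}=A_{\mathcal I}^\top(A_{\mathcal I}A_{\mathcal I}^\top)^+(A_{\mathcal I}z-b_{\mathcal I})$, which uses $A^\top(AA^\top)^+AA^\top=A^\top$. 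The right-hand side does not depend on which multiplier was chosen. Substituting into stationarity gives $g(z)=P_{\mathcal I}z+A_{\mathcal I}^\top(AA^\top)^+_{\mathcal I}b_{\mathcal I}$, which is the stated form with $c_{\mathcal I}$.

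\textbf{Polyhedrality.} Membership of $z$ in $R_{\mathcal I}$ should be equivalent to the following linear conditions:
\begin{itemize}
\item primal feasibility of the candidate point, $A(P_{\mathcal I}z+c_{\mathcal I})\le b$;
\item strict slackness $(A(P_{\mathcal I}z+c_{\mathcal I}))_i<b_i$ for $i\notin\mathcal I$, which is needed because $\mathcal A(z)=\mathcal I$ exactly;
\item the existence of $\lambda_{\mathcal I}>0$ with $A_{\mathcal I}^\top\lambda_{\mathcal I}=z-P_{\mathcal I}z-c_{\mathcal I}$.
\end{itemize}
Sufficiency follows from KKT sufficiency for convex problems. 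The first two conditions are affine inequalities in $z$. The third is the projection of a polyhedron in $(z,\lambda_{\mathcal I})$ space, and projections of polyhedra are polyhedra by Fourier--Motzkin. If the active rows are linearly independent, $\lambda_{\mathcal I}=(A_{\mathcal I}A_{\mathcal I}^\top)^{-1}(A_{\mathcal I}z-b_{\mathcal I})$ is explicit and the third condition is directly a set of affine strict inequalities. Strict inequalities make these regions relatively open polyhedra, and I will say so.

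\textbf{Measure-zero boundaries.} This is the main obstacle, because the sets $R_{\mathcal I}$ do not cover the points where some active constraint has a zero multiplier. I will show that both the common boundary of two regions and this complement lie in a finite union of proper affine subspaces, hence have Lebesgue measure zero.

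If $z$ lies outside every $R_{\mathcal I}$ but is a limit of points in them, then by continuity of $g$ (Theorem~\ref{thm:differentiability} with $\mu=1$) some constraint $i$ is active with $\lambda_i=0$, or some $\lambda_i$ degenerates. Concretely, $z$ satisfies $a_i^\top(P_{\mathcal J}z+c_{\mathcal J})=b_i$ for some $\mathcal J$ and some $i\notin\mathcal J$, or a multiplier component equation $\lambda_i(z)=0$ holds. Each of these is an affine equation in $z$.

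I must check that each such equation is nontrivial. For the activity equation, $P_{\mathcal J}a_i\neq0$ when $a_i\notin\operatorname{span}$ of the rows of $A_{\mathcal J}$. If instead $a_i$ lies in that span, then $i$ is automatically active on the face, and the equation holds identically there. In that case I will merge $i$ into $\mathcal J$ by passing to a maximal index set, and I expect this dependent-row bookkeeping to be the fiddly part. Since a finite union of hyperplanes is null, the boundaries $\overline{R_{\mathcal I}}\cap\overline{R_{\mathcal J}}$ and the uncovered set are null, which proves the third claim.
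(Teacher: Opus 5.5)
Your proposal follows essentially the same route as the paper: KKT stationarity plus the pseudoinverse to get $g(z)=P_{\mathcal I}z+c_{\mathcal I}$, a linear-inequality description of each $R_{\mathcal I}$, and covering the bad set by finitely many hyperplanes. It is more careful than the paper, which tacitly treats $\lambda(z)$ as unique (working only with the least-norm multiplier) and never checks that its hyperplanes are proper.

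There is one small slip. Your three conditions characterize $R_{\mathcal I}$ only when $A_{\mathcal I}c_{\mathcal I}=b_{\mathcal I}$; otherwise complementary slackness fails in your KKT-sufficiency step. For example, take $\mathcal C=\{y\in\mathbb{R}: y\le 1,\ -y\le 1\}$ and $\mathcal I=\{1,2\}$. Then $P_{\mathcal I}=0$, $c_{\mathcal I}=0$, and all three conditions hold for every $z$, although $R_{\{1,2\}}=\emptyset$. So add that $z$-independent equality to the list.

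The dependent-row step you deferred closes in a few lines. For an uncovered $z$, let $\mathcal J$ be the maximal support of a multiplier at $z$; it exists because the multiplier set is convex, so averaging gives one. Then $\mathcal J\subsetneq\mathcal A(z)$ and $g(z)=P_{\mathcal J}z+c_{\mathcal J}$. Suppose some $i\in\mathcal A(z)\setminus\mathcal J$ had $a_i=A_{\mathcal J}^\top\alpha$. Then $\lambda+t(e_i-\alpha)$, with $\alpha$ padded by zeros, would be a multiplier with strictly larger support for small $t>0$, a contradiction. Hence $P_{\mathcal J}a_i\neq 0$, and $z$ lies on a proper hyperplane.

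This handles every uncovered point directly. You therefore need neither the limit-point hypothesis nor the alternative $\lambda_i(z)=0$, which is not a well-defined affine condition when multipliers are non-unique.
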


Property 3 from Theorem~\ref{thm:piecewise-smooth} verifies the efficacy of optimizers that stochastically sample batches: since the boundaries have measure $0$, the probability of an iterate landing on the boundary is $0$. Therefore, the optimizers do not get stuck as they will not land exactly on a boundary. Note that at a boundary between regions $R_{\mathcal{I}}$ and $R_{\mathcal{J}}$, the Jacobian $\nabla g(z)$ changes discontinuously from $P_{\mathcal{I}}$ to $P_{\mathcal{J}}$. An optimizer landing exactly on such boundaries would encounter an ill-defined gradient (the subdifferential would be multi-valued), potentially causing 
oscillations or failure to make progress. 
\begin{theorem}[Spectral Structure and Gradient Supression of Fairness Layer Jacobian]
\label{thm:jacobian-spectral}

Let $g(z)$ and $\mathcal{A}(z)$ denote the instantiation of the fairness layer and index set of active constraints as defined in Theorem~\ref{thm:piecewise-smooth}. Then, by Theorem~\ref{thm:differentiability}, $g$ is globally Lipschitz continuous with constant 1 and differentiable almost everywhere. At points where $g$ is differentiable, the Jacobian $\mathbf{D}g(z) \in \mathbb{R}^{n \times n}$ satisfies:
\begin{enumerate}
    \item \textbf{Binary spectrum}: All eigenvalues of $\mathbf{D}g(z)$ belong to $\{0, 1\}$.
    
    \item \textbf{Gradient suppression}:  The component of any gradient normal to the active constraint surface is completely suppressed. In the context of backpropagation, for any gradient vector $v \in \mathbb{R}^n$:  
    \begin{equation*}
        A_{\mathcal{A}}(\mathbf{D}g(z) \cdot v) = 0. 
    \end{equation*}
\end{enumerate}
\end{theorem}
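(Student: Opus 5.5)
The plan is to reduce both claims to the affine local structure in Theorem~\ref{thm:piecewise-smooth}. At any $z$ where $g$ is differentiable, I would locate $z$ in the interior of some region $R_{\mathcal{I}}$ on which $g(z) = P_{\mathcal{I}} z + c_{\mathcal{I}}$. There the Jacobian is $\mathbf{D}g(z) = P_{\mathcal{I}} = I - A_{\mathcal{I}}^\top (A_{\mathcal{I}}A_{\mathcal{I}}^\top)^+ A_{\mathcal{I}}$.

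To justify this, I would argue as follows. Property 3 says the union of region boundaries has measure zero. Theorem~\ref{thm:differentiability} gives differentiability almost everywhere. So almost every point of differentiability lies in the interior of a single region, and there the Jacobian equals the affine map's linear part.

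For a differentiability point that happens to sit on a boundary, I would use the directional-derivative formula for projections onto polyhedra. The directional derivative is the projection onto the critical cone. Differentiability forces this map to be linear, so it is the projection onto a subspace $\{d : A_{\mathcal{J}} d = 0\}$ for an index set $\mathcal{J} \supseteq \{i \in \mathcal{A}(z): \lambda_i(z)>0\}$. I expect this boundary case, together with the degenerate situation of zero multipliers on active constraints, to be the main obstacle. I would handle it via the critical-cone argument, or else restrict the claims to the full-measure set of interior points, consistent with the "almost everywhere" framing.

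For the binary spectrum, the key step is to show $P_{\mathcal{I}}$ is the orthogonal projector onto $\ker A_{\mathcal{I}}$. Let $Q = A_{\mathcal{I}}^\top (A_{\mathcal{I}}A_{\mathcal{I}}^\top)^+ A_{\mathcal{I}}$. I would verify $Q^\top = Q$ and $Q^2 = Q$ using the pseudoinverse identity $M^+ M M^+ = M^+$ with $M = A_{\mathcal{I}}A_{\mathcal{I}}^\top$. Then $Q$ is the orthogonal projector onto $\mathrm{row}(A_{\mathcal{I}})$, since $A^\top (AA^\top)^+ A = A^+ A$. It follows that $P_{\mathcal{I}} = I - Q$ is a symmetric idempotent. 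Any eigenpair $P v = \lambda v$ then gives $\lambda v = P^2 v = \lambda^2 v$, so $\lambda \in \{0,1\}$. This also recovers $\|\mathbf{D}g\|_2 \le 1$, matching Corollary~\ref{thm:corr} with $\mu=1$.

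For gradient suppression, I would compute
\[
A_{\mathcal{I}} P_{\mathcal{I}} = A_{\mathcal{I}} - A_{\mathcal{I}}A_{\mathcal{I}}^\top (A_{\mathcal{I}}A_{\mathcal{I}}^\top)^+ A_{\mathcal{I}} = A_{\mathcal{I}} - M M^+ A_{\mathcal{I}}.
\]
This vanishes because $MM^+$ is the orthogonal projector onto $\mathrm{range}(M) = \mathrm{range}(A_{\mathcal{I}})$, which contains every column of $A_{\mathcal{I}}$. Hence $A_{\mathcal{A}}(\mathbf{D}g(z)v) = 0$ for all $v$. Since $\mathbf{D}g$ is symmetric, the same holds for the backpropagated vector $\mathbf{D}g(z)^\top v$, which gives the backpropagation interpretation.

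Finally, I would note that in the interior of $R_{\mathcal{I}}$ we have $\mathcal{A}(z) = \mathcal{I}$, so $A_{\mathcal{A}} = A_{\mathcal{I}}$ as the statement requires.
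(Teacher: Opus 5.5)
Your proposal is correct and follows essentially the same route as the paper: identify $\mathbf{D}g(z)$ with the linear part $P_{\mathcal{I}} = I - A_{\mathcal{I}}^\top(A_{\mathcal{I}}A_{\mathcal{I}}^\top)^+A_{\mathcal{I}}$ of the affine piece from Theorem~\ref{thm:piecewise-smooth}, recognize it as the orthogonal projector onto $\ker A_{\mathcal{I}}$, and read off both the $\{0,1\}$ spectrum and $A_{\mathcal{A}}\mathbf{D}g(z)=0$. You are more careful than the paper, which states the projector property without verifying the pseudoinverse identities and argues only at interior points of the regions; your critical-cone remark also covers the remaining differentiability points, because a critical cone that is a subspace lies in the tangent cone and is closed under negation, so it equals $\ker A_{\mathcal{A}(z)}$ (i.e.\ $\mathcal{J}=\mathcal{A}(z)$ exactly, including weakly active constraints).
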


Theorem~\ref{thm:jacobian-spectral} provides an interpretation of the gradients induced by the fairness layer: any gradient in the ``unfairness direction'' (i.e., directions normal to the active constraint surface defined by $\text{span}\{a_i^T : i \in \mathcal{A}(z)\}$) will be set exactly to $0$, while gradients in all other directions are unaffected. This ensures that backpropagation never encourages the model to move in directions that would violate active fairness constraints, while allowing learning to proceed freely in all feasible directions.

\section{Numerical Experiments} \label{sect:experiments}
We conduct numerical experiments across four distinct dataset settings and fairness constraint formulations to evaluate the effectiveness of the fairness layer and proposed inference algorithm. To highlight the versatility of the fairness layer, we demonstrate its use in two distinct paradigms: data modeling and predictive inference \cite{breiman2001statistical}.

A data modeling task assumes a parametric form for the joint distribution of \((x, y)\)
and seeks to infer the parameter vector governing this process. Rather than pure predictive accuracy, the emphasis is on understanding structural relationships and parameter interpretability. In contrast, the primary objective in a predictive inference task is to construct a mapping 
\( f\colon \mathcal{X} \to \mathbb{R} \) that minimizes expected prediction error, and the emphasis lies in generalization.

\subsection{Models}
While the specific fairness constraints and instantiations of the fairness layer $g(\cdot)$ vary for each numerical experiment setting, similar methods are employed for comparison across all datasets:

The first method utilizes the differentiable optimization fairness layer component $g(z)$ described in Section \ref{sect:method}. The model is trained with Stochastic Gradient Descent (SGD), and for predictive inference tasks, Algorithm~\ref{alg:inference_primal_dual} is employed. In the subsequent sections, we call this method the ``F-Layer'' method. The F-Layer method's final output is defined as $\hat{y} = g(f_\theta(X))$.

The ``Projection'' baseline trains a neural network using standard SGD without fairness considerations, then projects predictions onto the constraint set post-hoc during inference after training is complete.
The ``Penalty" and ``Strict Penalty" baselines minimize an augmented loss with constraint violation penalties: 
\begin{equation*} 
    \mathcal{L}_{\text{penalty}} = \frac{1}{N}\sum_{i=1}^N \ell(\hat{y}_i,y_i)  + \lambda \tilde{\ell}_\mathcal{C}, 
    \end{equation*}
where $\ell(\hat{y}_i,y_i)$ is standard MSE or cross-entropy loss, $\tilde{\ell}_\mathcal{C}$ measures constraint violation (typically quadratic or absolute value penalty), and $\lambda > 0$ is a weighting hyperparameter. The ``Penalty" method uses cross-validation to select the smallest $\lambda$ satisfying fairness constraints, while ``Strict Penalty" uses $\lambda \gg 0$ to likely ensure constraint satisfaction during inference.

\subsection{
Loan Default Prediction} 
This first experimental setting concerns loan default predictions for small businesses. Note for any given random train, validation, and test split, the dataset sizes are relatively small. Therefore, we repeat the entire experiment (including $\lambda$ selection for penalty models) $25$ times, each with a new, random dataset split to reduce the impact of any given random data split on the reported results. See Appendix \ref{sect:SyntheticData} for dataset details.

\subsubsection{Constraints}
In the loan default classification experiments, the fairness layer is defined as follows:
\begin{align}
\begin{split} 
    g(z) = &\argminF_p \|p-z\|_2^2 
    \\
    \text{s.t.} \ \ &
\bigg|\frac{1}{n -\sum_{i}x_{ij}}\sum_{\{i \ : \ x_{ij} = 0 \}} p_i \quad  \\ 
&\qquad -\frac{1}{\sum_{i}x_{ij}}\sum_{\{i \ : \ x_{ij} = 1 \}} p_i  \bigg| \leq 0.01 \quad \forall j \in \mathcal{S} . 
     \label{eq:loan_eq}
\end{split} 
\end{align}
Here $z$ are the raw predicted logits from the model. In the experiments, $|\mathcal{S}| = 2$. The first protected attribute records whether the business is in a rural or non-rural area, and the second attribute denotes whether the business is newly created or not. These constraints ensure that the model, on average, does not discriminate against new business or businesses in rural areas. The F-Layer and Projection methods employ the projection defined in \eqref{eq:loan_eq}, while Strict Penalty minimizes an unconstrained, regularized version of the loss function (see Appendix~\ref{sect:regLoss}).

\subsubsection{Results}

\begin{figure*}[ht]
    \centering
    \begin{subfigure}[b]{.3\textwidth}
        \centering
        \includegraphics[width=\textwidth]{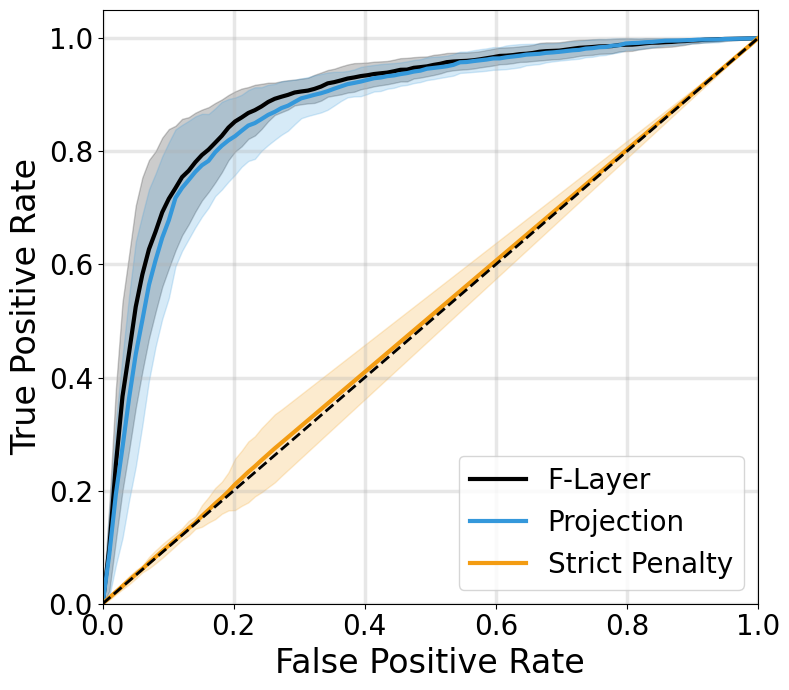}
        \caption{}
        \label{fig:loan_curves_roc}
    \end{subfigure}
    \begin{subfigure}[b]{.3\textwidth}
        \centering
        \includegraphics[width=\textwidth]{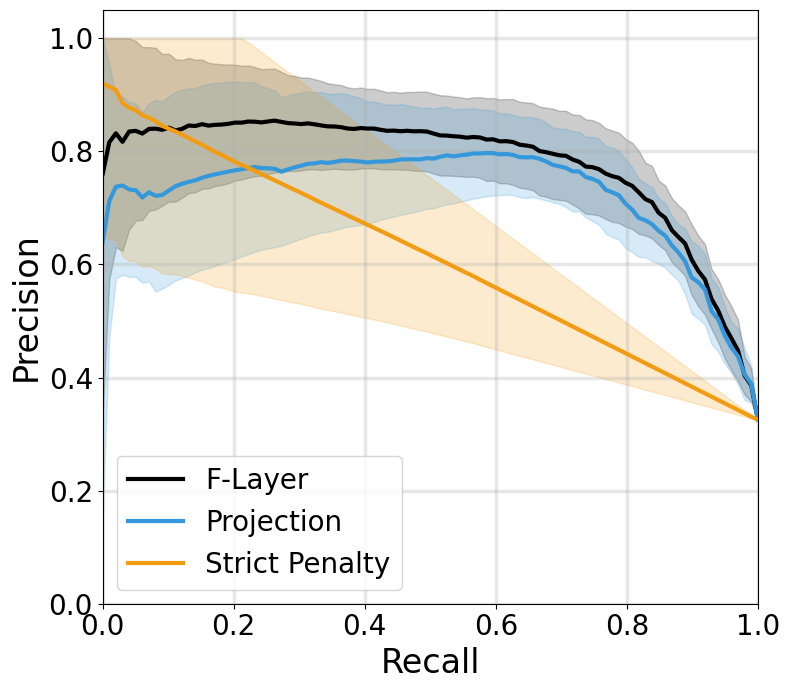}
        \caption{}
        \label{fig:loan_curves_pr}
    \end{subfigure}
    \begin{subfigure}[b]{0.3\textwidth}
        \centering
        \includegraphics[width=\textwidth]{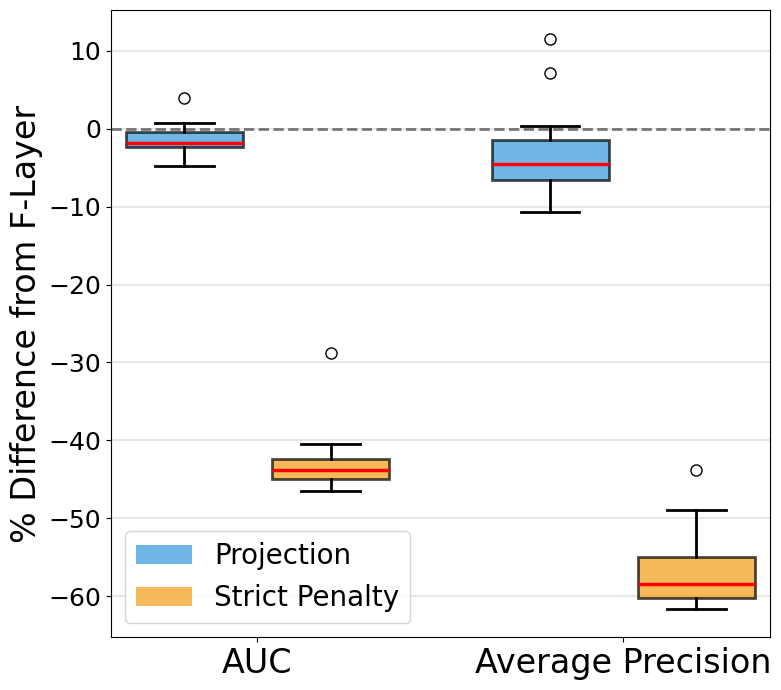}
        \caption{}
        \label{fig:boxplot_loan}
    \end{subfigure}
    \caption{(a) ROC curves on the test set, averaged over $25$ experiment repetitions with different, random train and test splits (with shaded regions representing $\pm 1$ standard deviation); 
    (b) Precision-Recall curves on the test set, averaged over $25$ experiment repeats (with shaded regions representing $\pm 1$ standard deviation); (c) Distribution of AUC and average precision on test set for all $25$ experiment repetitions.}
    \label{fig:loan_exp}
\end{figure*}

First, we observe in Figures \ref{fig:loan_curves_roc}, \ref{fig:loan_curves_pr}, and \ref{fig:boxplot_loan} that the F-Layer method yields modest but consistent improvements in AUC over the Projection method; 
the mean percent improvement offered by the F-Layer method is $1.46\%$. In the Precision-Recall curve, however, we see regions where the F-Layer method largely outperforms the Projection method. We also find that the Strict Penalty method exhibits degenerate behavior; the model collapses to predicting a single class for every observation. This results in an average reduction of approximately $45\%$ in AUC and $55\%$ in average precision. 
Importantly, all $3$ methods result in all constraints being satisfied for every random data split. However, the F-Layer method consistently outperforms other methods when balancing precision and recall. 

\subsection{
Employee Performance Data} 
\label{sect:employee_experiments}

Next, we explore the use-case of a company utilizing a machine learning model to analyze employee performance data. The model is tasked to predict the hourly wage of each employee, and any large, negative residuals $r_i = y_i - \hat{y}_i$ are flagged to management, indicating the employee is potentially underpaid and deserves a wage increase. See Appendix \ref{sect:SyntheticData} for dataset details.

\subsubsection{Constraints}
\label{sect:models_emp} 

We define the fairness layer as follows: 
\begin{align}
\begin{split} 
    &g(z) = \argminF_p\, \|p-z\|_2^2 
    \\&\text{s.t.}  
    \begin{cases} 
    \big|\frac{1}{n -\sum_{i}x_{ij}}\sum_{\{i \ : \ x_{ij} = 0 \}} p_i - y_i  \big| \leq 10^{-4} \ \forall j \in \mathcal{S}
     \\
    \big|\frac{1}{\sum_{i}x_{ij}}\sum_{\{i \ : \ x_{ij} = 1 \}} p_i - y_i  \big| \leq 10^{-4} \quad \forall j \in \mathcal{S}
     \\
    0 \leq p_i \leq 1 \quad \forall i . 
    \end{cases}
     \label{eq:exp_employee}
\end{split} 
\end{align}
Here $y_i$ is the ground truth target for sample $i$ and $z = f_\theta(X^{(b)}) \in \mathbb{R}^{n_b}$. See Appendix \ref{sect:regLoss} for the corresponding Penalty and Strict Penalty regularized loss formulations. We use $|\mathcal{S}|=5$ protected attributes: the employee's managerial status (manager versus not a manager), age (at least $50$ years old versus younger), gender (male versus female), degree category (technical degree obtained versus other or no degree), and marital status (married versus not married). 
\subsubsection{Results}
Since this is a data modeling task, we evaluate models on three criteria: satisfying fairness constraints \eqref{eq:exp_employee}, achieving reasonable error as wage prediction residuals are meaningless otherwise, and avoiding post-hoc adjustments to avoid substantially altering original predictions to account for fairness, thereby decreasing transparency and trust in the modeling process.

\begin{table}[t]
  \caption{Comparison of different methods for the data modeling task of employee raise recommendation.}
  \label{tab:employee_results}
  \centering
  \small
  \setlength{\tabcolsep}{2pt}
  \begin{sc}
    \begin{tabular}{lccc}
      \toprule
      \textbf{Model} & \textbf{Constraints} & \textbf{MSE} & \textbf{Individuals} \\
      & \textbf{Satisfied} & & \textbf{Lowered} \\
      \midrule
      F-Layer & 10/10 & 0.0834 & N/A \\
      Projection & 10/10 & 0.0838 & 319 \\
      Penalty $(\lambda = 100)$ & 10/10 & 0.0933 & N/A \\
      Penalty $(\lambda = 10)$ & 9/10 & 0.0938 & N/A \\
      Penalty $(\lambda = 1)$ & 1/10 & 0.0924 & N/A \\
      Penalty $(\lambda = .01)$ & 1/10 & 0.0827 & N/A \\
      \bottomrule
    \end{tabular}
  \end{sc}
\end{table}

Table~\ref{tab:employee_results} shows Penalty models with $\lambda < 100$ fail to satisfy constraints. Among constraint-satisfying methods, F-Layer and Projection achieve similar MSE (0.0834 vs 0.0838), while Penalty ($\lambda=100$) has 11.2\% higher error. However, Projection modifies predictions post-hoc, lowering wages for 319/1200 individuals (up to 5\%), which reduces transparency. The F-Layer method satisfies all constraints, achieves lowest error, and incorporates fairness end-to-end during training.
\subsection{Classification with Image Inputs}
To evaluate whether the fairness layer performs well with large-scale datasets and complex model architectures, we apply the method to two classification tasks with high-dimensional image inputs. The first task uses the CelebA dataset \cite{CelebA}, which contains over $200{,}000$ images of faces, and the prediction task is to determine whether a person is smiling. This setting is motivated by smile detection in smartphone cameras, where systematic differences in prediction quality across demographic groups may lead some users to receive consistently worse auto-captured photos. The second task uses the FairFace dataset \cite{FairFace}, which contains over $100{,}000$ face images with demographic annotations. The prediction task is to classify whether a person is over or under $30$ years old.  This task is motivated by age-gated access systems, such as content filtering or retail verification, where systematic prediction differences across demographic groups may incorrectly deny access to certain users.

For each dataset and fairness method, we train models across five image architectures using GPU acceleration. The models include ViT-B/16 \cite{50650} (a vision transformer with LoRA adapters \cite{hu2022lora}), Swin-T \cite{Liu2021SwinTH} (a shifted-window hierarchical transformer), DenseNet-121 \cite{Huang2016DenselyCC} (a densely connected convolutional network), ResNet-18 \cite{He2015DeepRL} (a residual convolutional network), and a convolutional neural network trained from scratch. These architectures span both convolutional and transformer-based image models, enabling evaluation of whether the fairness layer remains effective across substantially different feature representations and optimization regimes.

\subsubsection{Constraints}
In the CelebA dataset setting, we impose constraints requiring the average predicted logits of whether the person is smiling or not to be similar across race and gender groups, while in the FairFace dataset setting, we impose constraints requiring similar average predictions across intersectional race and gender groups for the age prediction task.

For both experiments, the fairness layer is applied to the scalar logit for the positive class and is defined as:
\begin{align}
\begin{split} 
    &g(z) = \argminF_p \, \|p-z\|_2^2 
    \\
    \text{s.t.} & \begin{cases} 
    \bigg|
    \frac{1}{n_b-\sum_i x_{ij}}
    \sum_{\{i \ : \ x_{ij}=0\}} p_i
    \quad \\
    \quad 
    - \frac{1}{\sum_i x_{ij}}
    \sum_{\{i \ : \ x_{ij}=1\}} p_i
    \bigg|
    \leq \epsilon_{\mathrm{img}}
    \quad \forall j \in \mathcal{S}_{\mathrm{img}} .
    \end{cases}
\end{split}
\label{eq:image_constraints}
\end{align}
Here $\mathcal{S}_{\mathrm{img}}$ denotes the set of binary demographic group indicators used in the image experiments, and $\epsilon_{\mathrm{img}}$ ($1\times10^{-4}$ for FairFace and $1\times 10^{-3}$ for CelebA) is the allowed tolerance in average predicted logits across groups. The F-Layer and Projection methods employ the projection defined in \eqref{eq:image_constraints}, while the Penalty and Strict Penalty baselines minimize the corresponding regularized objectives described in Appendix~\ref{sect:regLoss}.

\subsubsection{Results}
Table~\ref{tab:image_exp} shows that the F-Layer method consistently improves accuracy over all baselines. Relative gains over the Projection method are typically 2--5\%, while gains over Lagrangian methods are larger. See Appendix~\ref{app:image_ablation} for an ablation study on experiment hyperparameters.

\begin{table}[t]
\caption{Image classification results for each fairness method and model architecture. All methods satisfied the imposed fairness constraints. The F-Layer method consistently achieves the highest accuracy, followed by the Projection method. The Penalty and Strict Penalty methods obtain similar accuracies as feasibility-prioritized cross-validation selected penalty weights close to the large values used in the Strict Penalty regime, causing both objectives to heavily prioritize constraint satisfaction (as is necessary in high-stakes regimes) over classification loss.}
  \label{tab:image_exp}
  \centering
  \small
  \setlength{\tabcolsep}{5pt}
  \begin{tabular}{llcc}
    \toprule
    \textbf{Backbone} & \textbf{Method}
    & \textbf{CelebA Acc.} 
    & \textbf{FairFace Acc.} \\
    \midrule
    Swin-T
      & F-Layer        & \textbf{0.918} & \textbf{0.808} \\
      & Projection     & 0.888 & 0.765 \\
      & Penalty        & 0.501 & 0.551 \\
      & Strict Penalty & 0.501 & 0.551 \\
      \midrule
    ResNet-18 
      & F-Layer        & \textbf{0.904} & \textbf{0.762} \\
      & Projection     & 0.885 & 0.748 \\
      & Penalty        & 0.500 & 0.550 \\
      & Strict Penalty & 0.501 & 0.550 \\
    \midrule
    ViT-B/16
      & F-Layer        & \textbf{0.906} & \textbf{0.775} \\
      & Projection     & 0.876 & 0.754 \\
      & Penalty        & 0.501 & 0.550 \\
      & Strict Penalty & 0.501 & 0.550 \\
    \midrule
    DenseNet-121
      & F-Layer        & \textbf{0.915} & \textbf{0.762} \\
      & Projection     & 0.893 & 0.739 \\
      & Penalty        & 0.501 & 0.551 \\
      & Strict Penalty & 0.501 & 0.551 \\
    \midrule
    CustomCNN
      & F-Layer        & \textbf{0.814} & \textbf{0.673} \\
      & Projection     & 0.809 & 0.670 \\
      & Penalty        & 0.501 & 0.550 \\
      & Strict Penalty & 0.501 & 0.550 \\
    \bottomrule
  \end{tabular}
\end{table}

\subsection{Synthetic Data Setting}
\label{sect:synthetic_experiments}

\begin{figure*}[ht]
    \centering
    \begin{subfigure}[b]{0.32\textwidth}
        \centering
        \includegraphics[width=\textwidth]{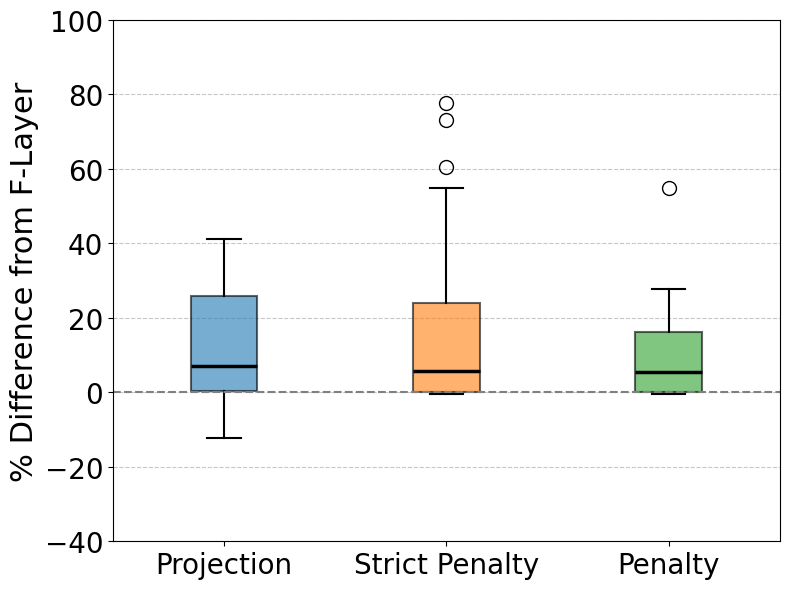}
        \caption{$b_{\text{train}} = 2000, b_{\text{infer}} = 2000$}
        \label{fig:fig1}
    \end{subfigure}
    \hfill
    \begin{subfigure}[b]{0.32\textwidth}
        \centering
        \includegraphics[width=\textwidth]{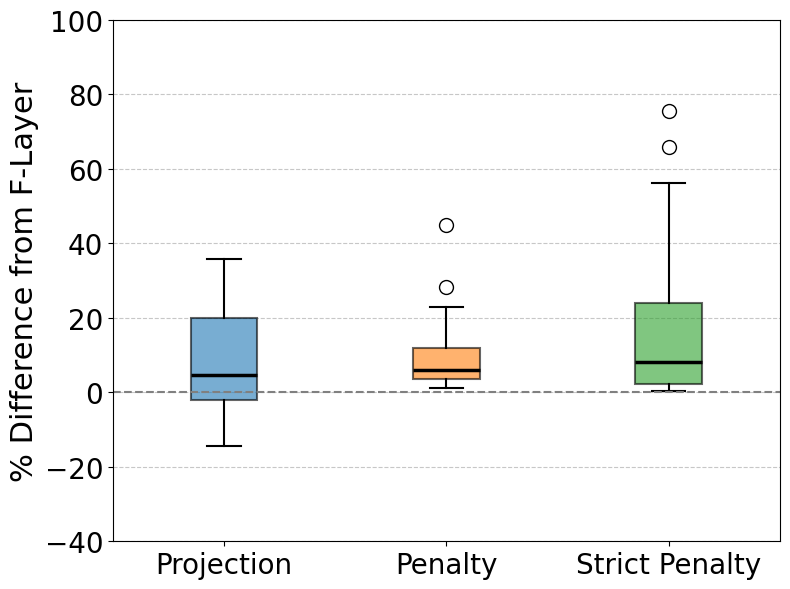}
        \caption{$b_{\text{train}} = 2000, b_{\text{infer}} = 16$}
        \label{fig:fig2}
    \end{subfigure}
    \hfill
    \begin{subfigure}[b]{0.32\textwidth}
        \centering
        \includegraphics[width=\textwidth]{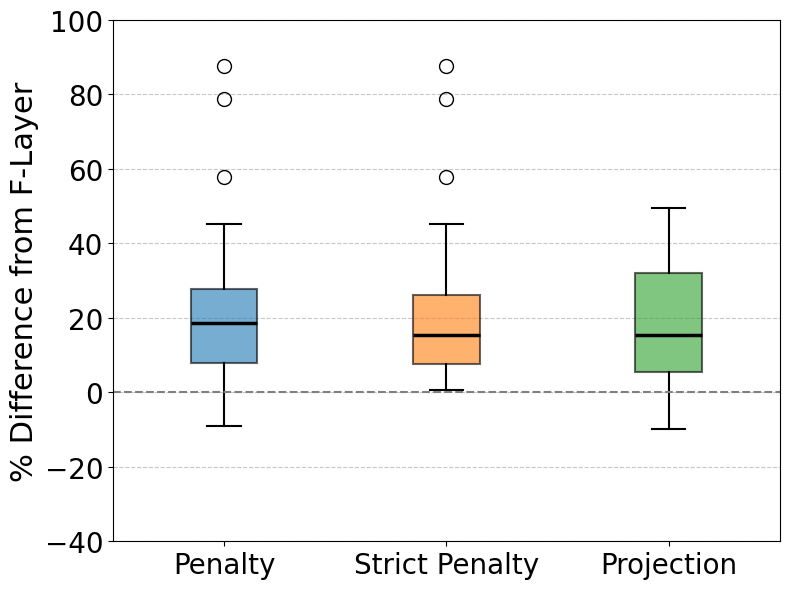}
        \caption{$b_{\text{train}} = 256, b_{\text{infer}} = 16$}
        \label{fig:fig3}
    \end{subfigure}
    \caption{Test loss percent differences relative to F-Layer across 32 datasets. Each boxplot shows $\{(L^{(j)}_i-F_i)/F_i \times 100\}_{i=1}^{32}$ where $F_i$ is F-Layer loss and $L^{(j)}_i$ is baseline $j$ loss. Penalty model excludes cases where fairness constraints were violated.}
    \label{fig:relative}
\end{figure*}

\begin{figure*}[ht]
    \centering
    \includegraphics[width=\textwidth]{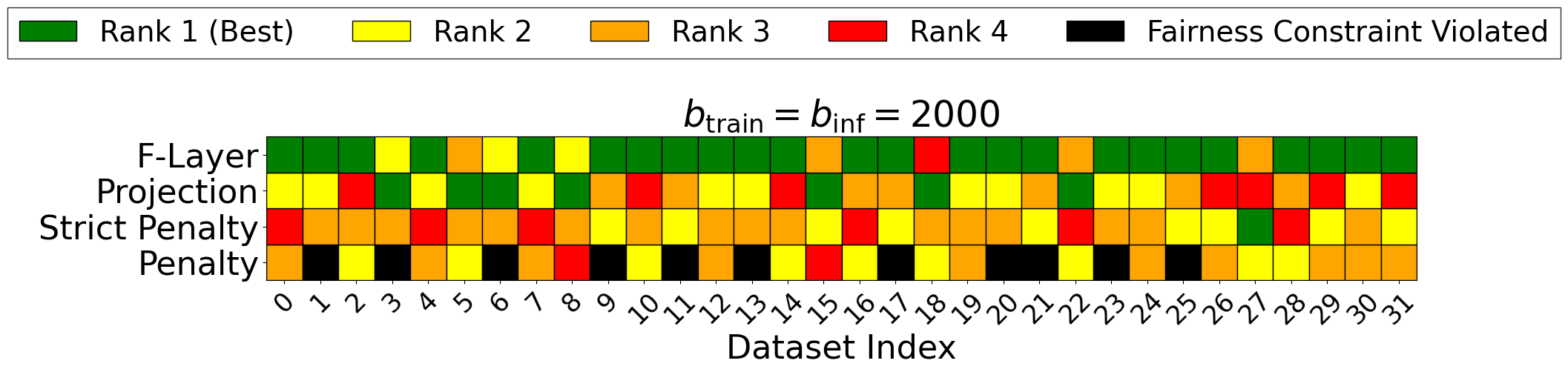}
    \caption{Test loss rankings across all 32 dataset scenarios.}
    \label{fig:ranks}
\end{figure*}

\subsubsection{Constraints}
In the synthetic regression experiments, we use:
\begin{align}
\begin{split} 
    g(z) = &\argminF_p \, \|p-z\|_2^2 
    \\
    \text{s.t.} & \begin{cases} 
\bigg|\frac{1}{n -\sum_{i}x_{ij}}\sum_{\{i \ : \ x_{ij} = 0 \}} p_i \quad  \\ \qquad -\frac{1}{\sum_{i}x_{ij}}\sum_{\{i \ : \ x_{ij} = 1 \}} p_i  \bigg| \leq 0.05 
     \\
    p_i \leq u \quad \forall i, \quad p_i \geq \ell \quad \forall \label{eq:exp}i
     .  
    \end{cases} 
\end{split} 
\end{align}
to define the fairness layer. Here $z = f_\theta(X^{(b)}) \in \mathbb{R}^{n_b}$. In other words, the constraint set ensures predictions stay within some bounds and Mean Conditional Parity \eqref{eq: MSPrelaxeq} is satisfied within a small tolerance $\epsilon = 0.05$. See Appendix \ref{sect:regLoss} for the corresponding Penalty and Strict Penalty formulations. 

The F-Layer, Projection, Penalty, and Strict Penalty models have the same architecture up to the final layer; all models have $15$ hidden layers with ReLU activation functions, and layer normalization is performed per-layer.

\subsubsection{Results}
All models are trained via SGD with $b_{\text{train}} \in \{256, 2000\}$ using SGD ($\eta = 10^{-4}$, decay of $0.66$ after 8 epochs without improvement, and early stopping with a patience of $25$ epochs). Figure~\ref{fig:relative} shows test loss distributions relative to F-Layer, Figure~\ref{fig:ranks} displays rankings across all datasets in the large batch regime, and Figure~\ref{fig:all_ranks} in Appendix~\ref{app:synth_extra} displays rankings across all scenarios. We highlight the key findings:

Consistency: F-Layer ranked first in 22--27 of 32 datasets across all batch size combinations (the range reflects variation across different $(b_{\text{train}}, b_{\text{inf}})$ 
configurations), demonstrating robustness to batch size regime and validating the online algorithm for small-batch inference.

Significant improvements: F-Layer achieved 18--30\% test loss reductions on 16--24 of 32 datasets, depending on batch size configuration. Median improvements were approximately 8\% vs all baseline methods when $b_{\text{train}} = 2000$ and 20\% vs all baseline methods when $b_{\text{train}} = 256$.

Penalty failures: Penalty methods frequently violated constraints despite validation-based $\lambda$ selection. For $b_{\text{train}} = b_{\text{infer}} = 2000$, 7 of 11 violations had fairness gaps 7 times larger than desired, with 4 exceeding 15 times the target.

F-Layer limitations: In the large training and inference batch regime, F-Layer underperformed on 6--8 datasets with tight constraints ($\ell=0, u=3.5$), suggesting training difficulty in restricted feasible regions. Post-hoc projections may avoid local minima in such cases.

\section{Conclusion}
The ``fairness layer'' is a neural network component that guarantees fairness while maintaining end-to-end differentiability, and numerical experiments show consistent improvements over common in-processing and post-processing methods. Promising directions for future work include incorporating nonconvex constraints into differentiable optimization layers and providing theoretic guarantees of when differentiable optimization layers can outperform post-hoc projections onto a constraint set.

\section*{Acknowledgements} 
This project has been supported in part by the DARPA AIQ grant HR00112520014. 
GM has been supported in part by NSF grants 
DMS-2522495, 
DMS-2145630, 
CCF-2212520,  
and by DFG SPP 2298 project 464109215, 
and the BMFTR in DAAD project 57616814 (SECAI).


\section*{Impact Statement}
This work introduces methods for enforcing fairness constraints in neural networks with mathematical guarantees. There are numerous societal implications stemming from this work that warrant important considerations:

\textit{Potential Positive Impacts}: Our fairness layer provides verifiable guarantees that specified notions of fairness are satisfied during model deployment, which may help organizations comply with emerging AI regulations.  By maintaining end-to-end differentiability, our method enables practitioners to incorporate fairness considerations directly into model training rather than as an afterthought, potentially leading to more trustworthy AI systems.

\textit{Limitations and Potential for Misuse}: First, our work focuses exclusively on group fairness metrics, which capture only one aspect of existing fairness desiderata; moreover, definitions of fairness may conflict with one another. This work does not argue that certain fairness definitions are more useful or appropriate than others. Second, the choice of protected attributes, fairness constraints, and tolerance parameters requires careful consideration and domain expertise. Inappropriate choices could introduce new forms of unintended disparity.

\textit{Broader Context}: We encourage practitioners to view our method as one tool within a comprehensive fairness strategy, not a complete solution to fairness in AI systems. They should complement and not replace broader efforts including transparent model documentation and ongoing monitoring. As discussed in Appendix \ref{app:AppB}, differentiable layers can be utilized for a wide array of applications beyond fairness. In such applications, we encourage a similar strategy of model transparency and monitoring.

\bibliography{references}
\bibliographystyle{icml2026}

\clearpage 
\appendix

\section{Differentiation Through Fairness Layer}\label{sect:Differentiate}
Critically, the Jacobian with respect to $z = f_\theta(X^{(b)})$ can be obtained by differentiating the KKT conditions of the optimization problem \cite{Gould_2021}. In this setting, these computations enable backpropagation through the fairness layer. Let 
\begin{align}
\mathcal{L}(\tilde y,\lambda,\nu; z)
    = \tilde{d}(\tilde y,z)
    + \lambda^\top(A\tilde y - m_1)
    + \nu^\top(B\tilde y - m_2)
\end{align}
denote the Lagrangian function, and let $(\tilde y^\star,\lambda^\star,\nu^\star)$ denote a primal-dual optimal point. The KKT optimality conditions are:
\begin{align}
\nabla_{\tilde y} d(\tilde y^\star, z) + A^\top\lambda^\star + B^\top\nu^\star = 0  \\
A\tilde y^\star - m_1 \leq 0 \\
B\tilde y^\star - m_2 = 0 \\
\lambda^\star \geq 0, \quad \lambda_i^\star(A\tilde y^\star - m_{1})_i = 0 \quad \forall i \in [q]
\end{align}

Let $\mathcal{A} = \{i : \lambda_i^\star > 0\}$ denote the set of active inequality constraints. Linearizing the KKT system around $(\tilde y^\star,\lambda^\star,\nu^\star)$ by taking the total differential with respect to $z$ yields the block linear system:
\begin{align}
\begin{bmatrix}
\nabla^2_{\tilde y\tilde y} \tilde{d}(\tilde{y}^*,z) & A_{\mathcal{A}}^\top & B^\top \\
A_{\mathcal{A}} & 0 & 0 \\
B & 0 & 0
\end{bmatrix}
\begin{bmatrix}
\mathrm{d}\tilde y \\
\mathrm{d}\lambda_{\mathcal{A}} \\
\mathrm{d}\nu
\end{bmatrix}
=
-
\begin{bmatrix}
\nabla^2_{\tilde y z} d(\tilde y^\star,z)\,\mathrm{d}z \\
0 \\ 0
\end{bmatrix},
\end{align}
where $A_{\mathcal{A}}$ contains only the rows of $A$ corresponding to active constraints and $\nabla^2_{\tilde y\tilde y} \mathcal{L} = \nabla^2_{\tilde y\tilde y} \tilde{d}(\tilde y^\star, z)$ is the Hessian of the dissimilarity function. Denoting the KKT matrix as $K$, by the implicit function theorem under the stated regularity conditions, $K$ is nonsingular, and thus $\mathrm{d}\tilde y = J_{g}(z)\,\mathrm{d}z$ is obtained by solving this linear system. In practice, auto-differentiation frameworks only require Jacobian-vector products, so backpropagation through $g$ reduces to solving a single linear system involving $K^\top$. This yields exact gradients of the projected predictions $\hat y = g(z)$ with respect to the raw network outputs $z = f_\theta(X^{(b)})$, enabling stable and fully end-to-end training.

\section{Nonconvex Fairness Constraints}\label{app:AppA}
In this work, we focus on notions of fairness of the form enumerated in \eqref{eq:fairnesslayer}. However, as opposed to expected values, some notions of group fairness in the classification setting involve the proportion of positive or negative classifications across protected classes; some instances include the most common form of demographic parity \cite{hardt2016equality} and  equalized odds \cite{hardt2016equality}. In the case of demographic parity, for example, consider a binary classification setting with $2$ protected classes:

\begin{equation}\label{demoParity}
    Pr(\tilde{y}=1 | X_{:j}=0) = Pr(\tilde{y}=1 | X_{:j}=1) 
\end{equation} 

To practically enforce this notion of fairness during training with a differentiable optimization layer, one would define the fairness constraint as follows (using the same notation as in \eqref{eq:exp}):
\begin{align}
\begin{split} 
    \bigg| \frac{1}{n - \sum_i x_{ij}}&\sum_{\{i \ : \ x_{ij}=0\}}\mathbf{1}_{p_i \geq 0.5} \quad \\ 
    & - \frac{1}{\sum_{i}x_{ij}}\sum_{\{i \ : \ x_{ij}=1\}}\mathbf{1}_{p_i \geq 0.5}\bigg| \leq \varepsilon 
    \end{split} 
\end{align}
Here $\mathbf{1}_{p_i \geq 0.5}$ is an indicator function equaling $1$ if the predicted probability is greater than $0.5$ and $0$ otherwise. This indicator function breaks both the convex and differentiable nature of the general fairness layer definition \eqref{eq:fairnesslayer}, thereby becoming computationally impractical for many applications.

To attempt to overcome the nonconvex nature of such fairness constraints, a number of solution methods have been explored. For example, convex surrogates can approximate the proportion of positive predictions for a given class, thus ensuring a proxy of demographic parity \cite{wu2018fairnessaware}. Similarly, convex-concave (difference of convex functions) formulations have been applied to ensure classifiers with convex decision boundaries satisfy other fairness metrics \cite{Zafar_2017,zafar2017parity}. Such methods either ensure an approximation of the original fairness constraints are satisfied or do not apply to the setting of neural networks. In contrast, the focus of this work is to guarantee notions of fairness when training and employing neural network model architectures for inference.

\section{Extensions Beyond Fairness}\label{app:AppB}
While we instantiate the linear constraints in \eqref{eq:fairnesslayer} for the purpose of fairness, there are other clear applications that fit within this framework. We provide $3$ such examples:

\textit{Portfolio Optimization.} Neural models are increasingly being utilized to make investment decisions in portfolio optimization settings. Such decisions must adhere to natural budgetary or risk management constraints. The problem can be formulated as a convex quadratic program:
\[
\begin{aligned}
\hat{x} = \arg\min_{x} \quad & \frac{1}{2}\|x - z\|_2^2 \\
\text{s.t.} \quad & \mathbf{1}^\top x = 1\\ \quad \quad & x \ge 0, \\ 
\quad & x \le u_{\max} \\
\quad & S x \le s_{\max}
\end{aligned}
\]
Where \(z \in \mathbb{R}^n\) is the original network's predicted investment allocation vector, \(u_{\max}\) is a vector of maximum allowable allocations per asset, \(S\) is a matrix encoding sector membership (i.e. rows represent various sectors and columns represent the investment assets), \(s_{\max}\) limits the total exposure to each sector, and \(\hat{x}\) is the feasible projected allocation satisfying all constraints.

\textit{Resource Allocation and Task Assignment.} Neural models can be used to predict resource assignments to tasks; for example, the model may predict required CPU cores, bandwidth, or personnel for a task. To ensure that capacity limits are not violated, a differentiable optimization layer can adjust these outputs:
\[
\hat{r} = \arg\min_{r} \|r - z\|_2^2 \quad \text{s.t.} \quad A r \le b, \quad r \ge 0 . 
\]
Where \(r\) is the resource assignment vector, \(z\) is the unconstrained network prediction, and \(A r \le b\) represents linear capacity constraints. The matrix \(A\) encodes which resources share a common capacity limit; each row of \(A\) corresponds to a linear constraint (e.g., a machine or network link), and each column corresponds to a task or resource variable. For instance, suppose we have four tasks \(r_1, r_2, r_3, r_4\) and two shared resources (machines or network links). Then a possible assignment could be represented as:

\[
r = 
\begin{bmatrix} r_1 \\ r_2 \\ r_3 \\ r_4 \end{bmatrix}, \quad
A = 
\begin{bmatrix}
1 & 0 & 1 & 0 \\  
0 & 1 & 1 & 1     
\end{bmatrix}, \quad
b = 
\begin{bmatrix} b_1 \\ b_2 \end{bmatrix}.
\]

The resulting linear constraint is $A r \leq b$, which ensures that each shared resource's total usage does not exceed its capacity. The 1's in each row indicate which tasks contribute to that resource, while 0's indicate no contribution.

\textit{Robotics and Autonomous Systems.} A neural network may predict candidate control inputs or trajectories; a trajectory is a planned sequence of positions, velocities, or actions over time that the robot should follow to achieve a task. These outputs often need to respect safety or physical constraints, such as velocity, acceleration, or joint limits:
\[
\hat{u} = \arg\min_{u} \|u - z\|_2^2 \quad \text{s.t.} \quad C u \le d , 
\]
where \(u\) represents control inputs, \(z\) is the unconstrained network prediction, and \(C u \le d\) encodes linear safety or actuation limits. Specifically, each row in the matrix \(C\) represents a linear constraint, and each column represents a control variable. Control variables can include actuator inputs (commands sent to motors to move robot limbs), joint angles or torques (the positions, velocities, or torques at robot joints that determine how the robot moves), or velocities and accelerations (desired linear or angular speeds of parts of the robot or vehicle along the trajectory).

\section{Datasets in Numerical Experiments}\label{sect:SyntheticData}
\subsection{Loan Default Dataset}
We use a public dataset from the United 
States Small Business Administration (SBA) \cite{sba_loans} to train the F-Layer and baseline models to predict whether a business applying for a loan will eventually default. The dataset contains $2102$ observations of applications submitted for approval with $35$ features such as the amount dispersed, length of loan term, whether the business is new, and the originating bank. The target variable is a binary outcome indicating whether the business defaulted on the loan. We one-hot encode all categorical variables, standardize numeric features, and split data into training, validation, and test sets. For models that minimize a loss function that includes a penalty term for constraint violations, we use employ the validation set to choose $\lambda$ via cross validation.

\subsection{Employee Performance Dataset}
The dataset contains $1200$ observations of employees with over $25$ features such as employee age, education level, internal performance rating, average customer satisfaction score, and hourly wage \cite{gupta2020inx}. We one-hot encode all categorical variables--leading to $62$ total predictors--and standardize both the training data and target variable.

\subsection{Synthetic Datasets}
Numerical experiments are performed across $32$ synthetically generated datasets for a typical predictive inference task. Each dataset has $n=40{,}000$ observations and $d=150$ features. A $70\% /6\%/24\%$ training/validation/test set split, respectively, is used. To holistically compare the fairness methods, datasets and fairness layers were created with a variation of key properties. Table \ref{tab:synthetic_results} details these properties.

\begin{table*}[t]
  \caption{The various dataset and fairness layer properties explored in the synthetic regression experiments. The first $5$ properties were combined to create $2^5 =32$ different datasets for the synthetic regression setting. Each model was tested on each dataset $3$ times: once with $b_{\text{train}} = b_{\text{infer}} = 2000$, once with $b_{\text{train}} = b_{\text{infer}} = 20$, and once with $b_{\text{train}} = 2000$, $b_{\text{infer}} = 20$.}
  \label{tab:synthetic_results}
  \begin{center}
    \begin{small}
      \begin{sc}
        \begin{tabular}{lp{8cm}p{4cm}}
          \toprule
          \textbf{Name} & \textbf{Description} & \textbf{Values} \\
          \midrule
          Class Imbalance & Percent of observations with protected attribute value $=1$ & 20\%, 50\% \\
          \hline
          Group Relevance & (1) Correlation between group label and input features (2) Group Bias added to true scores & (1) $0.3$, $0.7$ (2) $\pm 3$, $\pm 6$ \\
          \hline
          Noise Level & Std.\ dev.\ of Gaussian noise added to targets & $0.125$, $0.6$ \\
          \hline
          Constraint Tightness & Values for $\ell$ and $u$ in \eqref{eq:exp} & ``Loose'': $\ell=-3.5, u=3.5$ \newline ``Tighter'': $\ell=0, u=3.5$ \\
          \hline
          Data Structure & How target values were generated & Linear, Nonlinear \\
          \hline
          $b_{\text{train}}$ & Batch size during training & $2000$, $20$ \\
          \hline
          $b_{\text{infer}}$ & Batch size during inference & $2000$, $20$ \\
          \bottomrule
        \end{tabular}
      \end{sc}
    \end{small}
  \end{center}
  \vskip -0.1in
\end{table*}

To create each dataset, a binary protected attribute $X_{:j} \in \{0,1\}^n$ is first sampled from a Bernoulli distribution with success probability $p \in \{0.2, 0.5\}$, determining the degree of class imbalance. Conditional on this attribute, feature vectors $x \in \mathbb{R}^{d}$ are generated as correlated Gaussian random variables. Specifically, the feature matrix is divided into correlation blocks of size approximately $20$, with within-block covariance parameterized by a base correlation $\rho = 0.3$. For each block $B$, features are sampled independently from a sample normal distribution. Then, given the Cholesky factorization of the desired covariance matrix of the block $Cov(B) = LL^\top$, we multiply $X_{\text{block}} = BL^\top$ to obtain the final feature block with the desired correlation structure. After obtaining these blocks, a subset of $50$ features is then perturbed to be linearly correlated with the protected attribute, where the strength of the correlation is determined by the ``group relevance'' parameter ($0.3$ or $0.7$). This induces systematic differences between the two groups in both marginal and conditional distributions.

Targets $y$ are generated from either a linear or nonlinear data-generating process. Both methods generate a latent score $s$ and share the following structure when generating final target observations $y^*$:
\begin{align}
    y^* = s + b + \epsilon, \quad \epsilon \sim N(0,\sigma) 
    \ \text{and} \notag \\
b_i =
\begin{cases}
+b, & x_{ij} =0 \text{ for } j \in \mathcal{S},\\
-b, & x_{ij} =1 \text{ for } j \in \mathcal{S} . 
\end{cases}
\notag
\end{align}
The methods generate the latent score $s$ in different ways. For the linear method, a sparse coefficient vector $\beta \in \mathbb{R}^{d}$ with nonzero entries on $15$ randomly selected features is used to compute the latent score  $s = x^\top\beta$, where nonzero entries of $\beta$ are sampled from a standard normal distribution.

For the nonlinear datasets, the target includes higher-order terms, combining polynomial transformations and random pairwise interactions among features:

\begin{equation}
s = 0.7(x^\top \beta) + 0.2\sum_j \alpha_j x_j^2 + 0.1\sum_{k \neq l}\gamma_{kl}x_k x_l , 
\end{equation}
where coefficients $\alpha_j \sim N(0,.4), \ \gamma_{kl}\sim N(0,.4)$. 

After generating targets $y^*$, all values are normalized to have zero mean and unit variance before training, and the variance of observations with protected attribute value $0$ is slightly reduced by scaling its values by $0.85$ to simulate unequal outcome variability across groups.

\section{Regularized Loss Functions in Numerical Experiments}\label{sect:regLoss}
\subsection{Loan Default Experiments}
The Strict Penalty method minimizes the following
regularized loss function: 
\begin{align*}
\mathcal{L}_{\text{penalty}} 
= -\frac{1}{N} \sum_{i=1}^{N} \left[ y_i \log(\sigma(z_i)) + (1 - y_i)\log(1 - \sigma(z_i)) \right] 
\\ + \lambda \sum_{j \in \mathcal{S}}\bigg( 
\frac{1}{n -\sum_{i}x_{ij}}\sum_{\{i \ : \ x_{ij} = 0 \}} z_i 
- \frac{1}{\sum_{i}x_{ij}}\sum_{\{i \ : \ x_{ij} = 1 \}} z_i  
\bigg)^2 \notag
\end{align*}
When performing cross validation for all $25$ data splits to find suitable values of $\lambda$, we find that the optimization landscape is extremely sensitive to different weighting values.
For example, the degree of constraint violation between the validation and test sets often varied widely, and increasing $\lambda$ by small amounts often led to the model predicting either every observation defaulting or every observation not defaulting on the loans. This pattern persisted regardless of the chosen form of the penalty term (quadratic, absolute value, etc.). Due to the difficulty of finding values of $\lambda$ that lead to constraint satisfaction without degenerate model behavior, we only employ the Strict Penalty model with $\lambda = 1000$ and do not include the Penalty model in calculations. However, the Penalty model is included in comparisons for other data settings (Section \ref{sect:employee_experiments} and Section \ref{sect:synthetic_experiments}).

\subsection{Employee Performance Experiments}
In addition to the F-Layer method and Projection method with projection defined by \eqref{eq:exp_employee}, we utilize the following augmented loss function for the Penalty method: 
\begin{align}
\begin{split} 
&\mathcal{L}_{\text{penalty}} = \frac{1}{N} \sum_{i=1}^{N} \| y_i - \hat{y}_i \|_2^2 
\quad \\
&+ \lambda \sum_{{j} \in \mathcal{S}} \bigg( 
\bigg|\frac{1}{n -\sum_{i}x_{ij}}\sum_{\{i \ : \ x_{ij} = 0 \}}\hat{y}_i  - y_i \bigg| 
\quad  \\
& \qquad \qquad \qquad 
+\bigg|\frac{1}{\sum_{i}x_{ij}}\sum_{\{i \ : \ x_{ij} = 1 \}} \hat{y}_i - y_i \bigg| 
\bigg) . 
\end{split} 
\end{align}
Here $\lambda$ is the penalty weight controlling the trade-off between prediction accuracy and fairness. 
We re-train the model across a grid of candidate $\lambda$ values and pick the smallest $\lambda$ such that the fairness constraints are satisfied. 

\subsection{Image Classification Experiments}
In addition to the F-Layer method and Projection method with projection defined by
\eqref{eq:image_constraints}, the Penalty and Strict Penalty methods minimize an augmented
binary cross-entropy loss with a squared pairwise group-gap penalty. Let $s_i$ denote the
demographic group label for observation $i$, let $\mathcal{G}$ denote the set of demographic
groups appearing in the batch, and define $I_g = \{i : s_i = g\}$.
The augmented objective is then:
\begin{align}
\begin{split}
\mathcal{L}_{\text{penalty}}
= &-\frac{1}{N} \sum_{i=1}^{N}
\left[
y_i \log(\sigma(z_i))
+
(1-y_i)\log(1-\sigma(z_i))
\right]
\\
&+
\lambda
\sum_{\substack{g,h \in \mathcal{G} \\ g < h}}
\left(
\frac{1}{|I_g|}\sum_{i\in I_g} z_i
-
\frac{1}{|I_h|}\sum_{i\in I_h} z_i
\right)^2 .
\end{split}
\label{eq:image_penalty_loss}
\end{align}
Here, $z_i$ is the raw logit output of the model for sample $i$, $\sigma(\cdot)$ denotes the sigmoid function, and $\lambda > 0$ controls the strength of the fairness penalty. The penalty term is the sum of squared pairwise differences between average logits across demographic groups in the batch. As in the other experiments, the Penalty method selects $\lambda$ from a candidate set using validation performance subject to constraint satisfaction, while the Strict Penalty method uses a large fixed value of $\lambda$ to strongly encourage constraint satisfaction.

\subsection{Synthetic Experiments}\label{app:synth_extra}
The Penalty and Strict Penalty methods minimize the following augmented loss:
\begin{align}
\begin{split} 
    &\mathcal{L}_{\text{penalty}} = \frac{1}{N} \sum_{i=1}^{N} \| y_i - \hat{y}_i \|_2^2 \quad  \\ 
    &+\lambda \bigg|\frac{1}{n -\sum_{i}x_{ij}}\sum_{\{i \ : \ x_{ij} = 0 \}} \hat{y}_i - \frac{1}{\sum_{i}x_{ij}}\sum_{\{i \ : \ x_{ij} = 1 \}} \hat{y}_i   \bigg| . 
    \end{split} 
\end{align}
 To incorporate the box constraints from \eqref{eq:exp} in the Penalty and Strict Penalty methods, the outputs from the model $z = f_\theta(X)$ are transformed via the shifting function $\hat{y} = \ell + (u - \ell)\sigma(z)$. Whereas the Strict Penalty method always uses a large penalty $\lambda = 5000$ for all datasets, the Penalty model chooses $\lambda$ from a candidate set via cross validaton.

\begin{figure*}
    \centering
    \includegraphics[width=\textwidth]{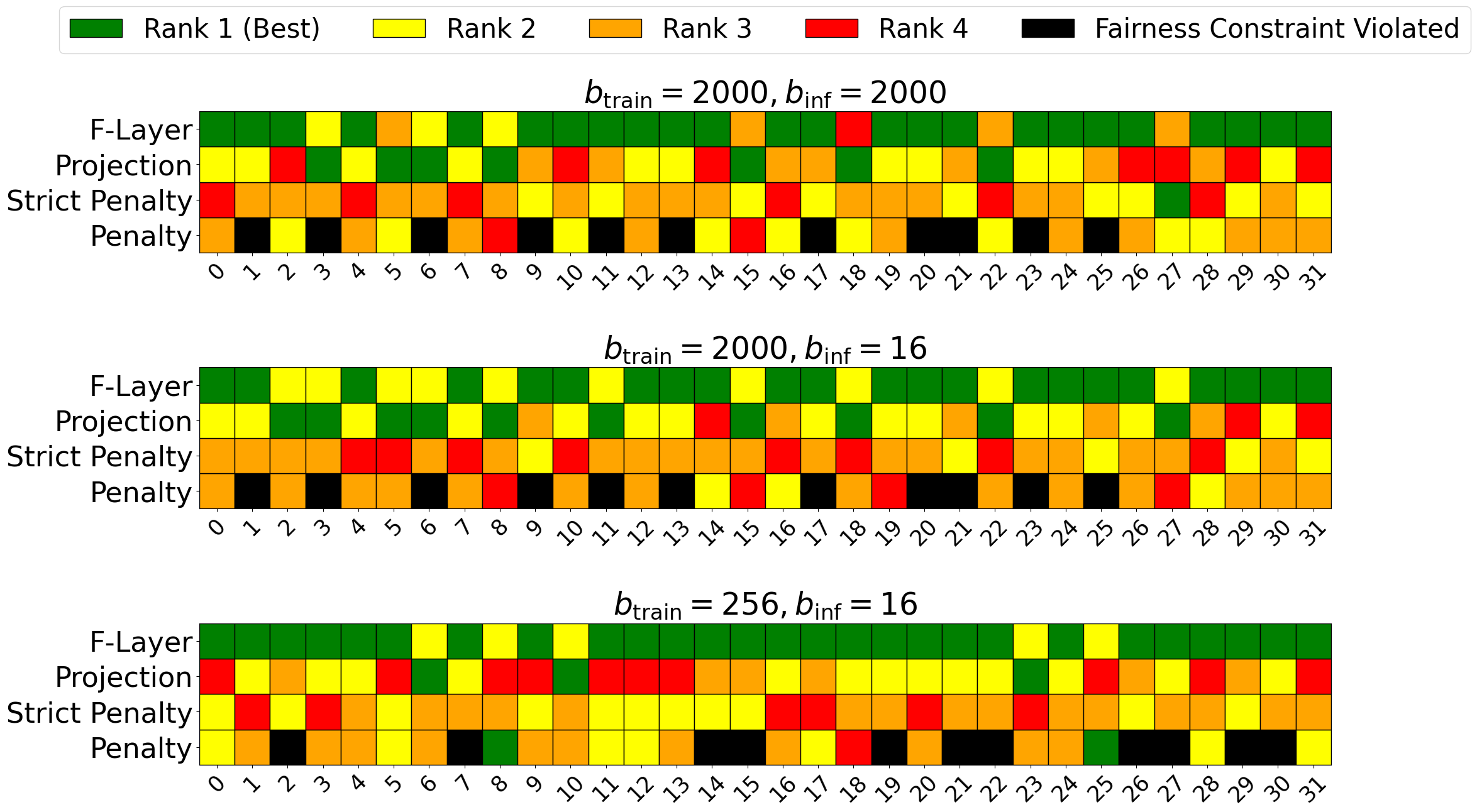}
    \caption{Test loss rankings across all 32 dataset scenarios and batch size combinations. Black cells indicate constraint violations.}
    \label{fig:all_ranks}
\end{figure*}

\section{Image Classification Experiments Ablation Study}\label{app:image_ablation}
\paragraph{Ablation 1: Constraint Tightness Threshold $(\epsilon)$.}

To ensure that the original image classification results are not simply a byproduct of a specific constraint tightness, we relax the allowed slack from $10^{-3}$ to $10^{-2}$ for the CelebA experiments and from $10^{-4}$ to $10^{-3}$ for the FairFace experiments. Overall, as shown in Table~\ref{tab:larger_slack}, we observe the same pattern as in the original experiments. The F-Layer method consistently improves accuracy over the Projection method, with gains ranging from approximately $1\%$ to $5\%$ depending on the architecture and dataset. If the unconstrained and constrained solutions are close in distance (i.e. constraint slack is extremely loose), one would expect F-Layer and Projection methods to achieve more similar results as neither projection would alter the unconstrained solution.

\begin{table}[t]
  \caption{Image classification accuracy under the larger-slack setting.}
  \label{tab:larger_slack}
  \centering
  \small
  \setlength{\tabcolsep}{6pt}
  \begin{tabular}{llcc}
    \toprule
    \textbf{Backbone} & \textbf{Method} 
    & \textbf{CelebA} 
    & \textbf{FairFace} \\
    \midrule
    ViT-B/16 & F-Layer    & \textbf{0.907} & \textbf{0.777} \\
             & Projection & 0.861 & 0.748 \\
             & Penalty    & 0.522 & 0.551 \\
    \midrule
    DenseNet & F-Layer    & \textbf{0.898} & \textbf{0.739} \\
             & Projection & 0.874 & \textbf{0.739} \\
             & Penalty    & 0.501 & 0.551 \\
    \midrule
    Swin-T   & F-Layer    & \textbf{0.895} & \textbf{0.774} \\
             & Projection & 0.871 & 0.765 \\
             & Penalty    & 0.645 & 0.551 \\
    \midrule
    ResNet-18 & F-Layer    & \textbf{0.904} & \textbf{0.765} \\
              & Projection & 0.870 & 0.743 \\
              & Penalty    & 0.705 & 0.551 \\
    \midrule
    CustomCNN & F-Layer    & \textbf{0.815} & \textbf{0.670} \\
              & Projection & 0.794 & 0.663 \\
              & Penalty    & 0.501 & 0.551 \\
    \bottomrule
  \end{tabular}
\end{table}

\paragraph{Ablation 2: Primal-Dual Threshold $(b_{\tau})$.} We next ablate the primal-dual threshold $b_{\tau}$ used by the inference algorithm. The results in Tables~\ref{tab:FairFace_bt_ablation} and~\ref{tab:celeba_bt_ablation} show that the choice of $b_{\tau}$ has negligible impact on accuracy for moderate inference batch sizes. For example, for a fixed model and dataset, the rows corresponding to $b_{\tau}=4$ and $b_{\tau}=1024$ are nearly identical once $b_{\mathrm{infer}}$ is moderately large. At very small batch sizes, particularly when the batch size is comparable to the number of protected groups, the primal-dual regime can yield modest accuracy gains over hard projection, consistent with softer enforcement allowing individual batches more expressivity.

When stratified sampling is used, as in this ablation study, both regimes achieve aggregate fairness by Lemma~\ref{thm:varying_proportions}. In streaming settings where stratified sampling is infeasible, Theorem~\ref{thm:online} guarantees that the sample-weighted average per-batch fairness violation still converges to at most $\epsilon$.

\begin{table*}[t]
  \caption{FairFace accuracy across inference batch sizes and primal-dual thresholds. The smallest batch size, $b_{\mathrm{infer}}=14$, equals the number of protected groups.}
  \label{tab:FairFace_bt_ablation}
  \centering
  \small
  \setlength{\tabcolsep}{4pt}
  \begin{tabular}{lccccccc}
    \toprule
    \textbf{Backbone} & \textbf{$b_{\tau}$}
    & \textbf{$b_{\mathrm{infer}}=14$}
    & \textbf{$b_{\mathrm{infer}}=16$}
    & \textbf{$b_{\mathrm{infer}}=32$}
    & \textbf{$b_{\mathrm{infer}}=64$}
    & \textbf{$b_{\mathrm{infer}}=128$}
    & \textbf{$b_{\mathrm{infer}}=256$} \\
    \midrule
    ResNet-18 & 4    & 0.594 & 0.615 & 0.704 & 0.748 & 0.765 & 0.769 \\
              & 1024 & 0.576 & 0.607 & 0.700 & 0.748 & 0.765 & 0.769 \\
    \midrule
    ViT-B/16  & 4    & 0.587 & 0.606 & 0.717 & 0.758 & 0.777 & 0.784 \\
              & 1024 & 0.579 & 0.607 & 0.717 & 0.758 & 0.777 & 0.784 \\
    \midrule
    CustomCNN & 4    & 0.571 & 0.577 & 0.648 & 0.668 & 0.670 & 0.680 \\
              & 1024 & 0.569 & 0.577 & 0.648 & 0.668 & 0.670 & 0.680 \\
    \bottomrule
  \end{tabular}
\end{table*}

\begin{table*}[t]
  \caption{CelebA accuracy across inference batch sizes and primal-dual thresholds.}
  \label{tab:celeba_bt_ablation}
  \centering
  \small
  \setlength{\tabcolsep}{4pt}
  \begin{tabular}{lccccccc}
    \toprule
    \textbf{Backbone} & \textbf{$b_{\tau}$}
    & \textbf{$b_{\mathrm{infer}}=4$}
    & \textbf{$b_{\mathrm{infer}}=8$}
    & \textbf{$b_{\mathrm{infer}}=16$}
    & \textbf{$b_{\mathrm{infer}}=32$}
    & \textbf{$b_{\mathrm{infer}}=64$}
    & \textbf{$b_{\mathrm{infer}}=128$} \\
    \midrule
    ResNet-18 & 4    & 0.765 & 0.778 & 0.827 & 0.862 & 0.877 & 0.889 \\
              & 1024 & 0.751 & 0.775 & 0.825 & 0.862 & 0.877 & 0.889 \\
    \midrule
    ViT-B/16  & 4    & 0.767 & 0.783 & 0.832 & 0.865 & 0.880 & 0.894 \\
              & 1024 & 0.752 & 0.781 & 0.831 & 0.865 & 0.880 & 0.893 \\
    \midrule
    CustomCNN & 4    & 0.705 & 0.714 & 0.754 & 0.781 & 0.789 & 0.801 \\
              & 1024 & 0.702 & 0.714 & 0.753 & 0.781 & 0.788 & 0.801 \\
    \bottomrule
  \end{tabular}
\end{table*}

\section{Comparison of Fairness Methods}\label{app:compareTable}
Table~\ref{tab:fairness_methods_comparison} provides a comparison between existing fairness methods in relation to goals (G1), (G2), and (G3) enumerated in Section~\ref{sect:goals}.

\begin{table*}[t]
\centering
\caption{Comparison of fairness methods for machine learning with respect to stated goals (G1--G3) described in Section~\ref{sect:goals}.}
\label{tab:fairness_methods_comparison}
\small
\begin{tabular}{lllll}
\toprule
\textbf{Method Category} & \textbf{Type} & \textbf{(G1)} & \textbf{(G2)} & \textbf{(G3)} \\
\midrule
\textbf{Fairness Layer (ours)} 
& In & \cmark & \cmark & \cmark \\
Post-hoc projection~\cite{pmlr-v108-wei20a,kim2019multiaccuracy,hardt2016equality} 
& Post & \cmark & \xmark & \cmark \\
Data reweighting / augmentation~\cite{Kamiran2012DataPT,ktena2024generative} 
& Pre & \xmark & \cmark & \cmark \\
Lagrangian penalty~\cite{manisha2020fnnc,cotter2019optimization,celis2019classification} 
& In & \xmark & \cmark & \cmark \\
Convex surrogates / DCP~\cite{zafar2017fairness,Zafar_2017,wu2018fairnessaware} 
& In & $\circ^{\mathrm{a}}$ & \cmark & \xmark$^{\mathrm{b}}$ \\
\bottomrule
\end{tabular}
\begin{minipage}{0.95\linewidth}
\footnotesize
\textsuperscript{a} Only enforces surrogate constraints of hard-thresholded problems, but theoretically can exactly satisfy affine constraints. \\
\textsuperscript{b} Restricted to models with convex decision boundaries.
\end{minipage}
\end{table*}

\section{Proofs}\label{sect:proofDiff}

\subsection{Proof of Lemma \ref{thm:varying_proportions}}\label{sect:prooflemm1}
\begin{proof}
We decompose the aggregate fairness gap into a baseline term (assuming equal group proportions across batches) plus correction terms capturing the effect of varying proportions. Let $n_b = n_{b,0} + n_{b,1}$ denote the size of batch $b$, and let $N = \sum_{b=1}^B n_b$ denote the total number of samples across all batches.

We can express the aggregate statistics in terms of batch proportions:
\begin{align}
\mathcal{F}_0 &= \frac{\sum_{b=1}^B n_{b,0} \mathcal{F}_{b,0}}{\sum_{b=1}^B n_{b,0}} 
= \frac{\sum_{b=1}^B p_b n_b \mathcal{F}_{b,0}}{\bar{p} N} \\
\mathcal{F}_1 &= \frac{\sum_{b=1}^B n_{b,1} \mathcal{F}_{b,1}}{\sum_{b=1}^B n_{b,1}}
= \frac{\sum_{b=1}^B (1-p_b) n_b \mathcal{F}_{b,1}}{(1-\bar{p}) N}
\end{align}

\textbf{Step 1: Equal proportions baseline.}
Consider the hypothetical aggregate statistics if all batches had the same group proportion $\bar{p}$:
\begin{align}
\tilde{\mathcal{F}}_0 &= \frac{\sum_{b=1}^B n_b \mathcal{F}_{b,0}}{\sum_{b=1}^B n_b} = \frac{1}{N}\sum_{b=1}^B n_b \mathcal{F}_{b,0}\\
\tilde{\mathcal{F}}_1 &= \frac{\sum_{b=1}^B n_b \mathcal{F}_{b,1}}{\sum_{b=1}^B n_b} = \frac{1}{N}\sum_{b=1}^B n_b \mathcal{F}_{b,1}
\end{align}

We bound $|\tilde{\mathcal{F}}_0 - \tilde{\mathcal{F}}_1|$ as follows:
\begin{align*}
|\tilde{\mathcal{F}}_0 - \tilde{\mathcal{F}}_1| 
&= \left|\frac{1}{N}\sum_{b=1}^B n_b \mathcal{F}_{b,0} - \frac{1}{N}\sum_{b=1}^B n_b \mathcal{F}_{b,1}\right| \\
&= \frac{1}{N}\left|\sum_{b=1}^B n_b (\mathcal{F}_{b,0} - \mathcal{F}_{b,1})\right| \\
&\le \frac{1}{N}\sum_{b=1}^B n_b |\mathcal{F}_{b,0} - \mathcal{F}_{b,1}| \\
&\le \frac{1}{N}\sum_{b=1}^B n_b \epsilon = \epsilon
\end{align*}
where the last inequality uses the per-batch fairness constraint $|\mathcal{F}_{b,0} - \mathcal{F}_{b,1}| \le \epsilon$.

\textbf{Step 2: Deviation from equal proportions.}
We now bound how much the actual aggregate statistics $\mathcal{F}_0$ and $\mathcal{F}_1$ deviate from the equal-proportions baseline $\tilde{\mathcal{F}}_0$ and $\tilde{\mathcal{F}}_1$:
\begin{align*}
|\mathcal{F}_0 - \tilde{\mathcal{F}}_0| 
&= \left| \frac{\sum_{b=1}^B p_b n_b \mathcal{F}_{b,0}}{\bar{p} N} - \frac{\sum_{b=1}^B n_b \mathcal{F}_{b,0}}{N} \right| \\
&= \left| \frac{\sum_{b=1}^B (p_b - \bar{p}) n_b \mathcal{F}_{b,0}}{\bar{p} N} \right| \\
&\le \frac{\sum_{b=1}^B |p_b - \bar{p}| \cdot n_b \cdot |\mathcal{F}_{b,0}|}{\bar{p} N} \\
&\le \frac{\Delta_p \cdot R \sum_{b=1}^B n_b}{\bar{p} N} \\
&= \frac{\Delta_p \cdot R}{\bar{p}}
\end{align*}

Similarly, we have:
\begin{align*}
|\mathcal{F}_1 - \tilde{\mathcal{F}}_1| 
&= \left| \frac{\sum_{b=1}^B (1-p_b) n_b \mathcal{F}_{b,1}}{(1-\bar{p}) N} - \frac{\sum_{b=1}^B n_b \mathcal{F}_{b,1}}{N} \right| \\
&= \left| \frac{\sum_{b=1}^B (\bar{p} - p_b) n_b \mathcal{F}_{b,1}}{(1-\bar{p}) N} \right| \\
&\le \frac{\sum_{b=1}^B |p_b - \bar{p}| \cdot n_b \cdot |\mathcal{F}_{b,1}|}{(1-\bar{p}) N} \\
&\le \frac{\Delta_p \cdot R}{1-\bar{p}}
\end{align*}

\textbf{Step 3: Triangle inequality.}
Combining the results from Steps 1 and 2 via the triangle inequality:
\begin{align*}
|\mathcal{F}_0 - \mathcal{F}_1| 
&\le |\mathcal{F}_0 - \tilde{\mathcal{F}}_0| + |\tilde{\mathcal{F}}_0 - \tilde{\mathcal{F}}_1| + |\tilde{\mathcal{F}}_1 - \mathcal{F}_1| \\
&\le \frac{\Delta_p \cdot R}{\bar{p}} + \epsilon + \frac{\Delta_p \cdot R}{1-\bar{p}} \\
&= \epsilon + \Delta_p \cdot R \left(\frac{1}{\bar{p}} + \frac{1}{1-\bar{p}}\right)
\end{align*}
\end{proof}

\subsection{Proof of Theorem~\ref{thm:online}}\label{sect:onlineProof}

\begin{proof}
We establish the aggregate fairness guarantee by analyzing the growth of the dual variable under diminishing step sizes and connecting it to cumulative constraint violations.

\noindent\textbf{Step 1: Bound growth of dual variable}

From the dual update, $\lambda_{t+1} = \max\{0, \lambda_t + \eta_t w_t(\hat{y}_{b_t})\}$ with $\eta_t = \frac{\eta}{\sqrt{t}}$, the projection onto $\mathbb{R}_+$ ensures that 
\begin{align}
\lambda_{t+1}^2 
&\le 
(\lambda_t + \eta_t w_t(\hat{y}_{b_t}))^2 \nonumber\\
&= \lambda_t^2 + 2\eta_t \lambda_t w_t(\hat{y}_{b_t}) + \eta_t^2 w_t(\hat{y}_{b_t})^2
\label{eq:dual_squared}\\
&\implies 2\eta_t \lambda_t w_t(\hat{y}_{b_t}) 
\ge 
\lambda_{t+1}^2 - \lambda_t^2 - \eta_t^2 w_t(\hat{y}_{b_t})^2
\label{eq:dual_increment}
\end{align}

Summing from $t=1$ to $T$ and using $\lambda_1 = 0$:
\begin{equation}
2\sum_{t=1}^T \eta_t \lambda_t w_t(\hat{y}_{b_t})
\ge 
\lambda_{T+1}^2 - \sum_{t=1}^T \eta_t^2 w_t(\hat{y}_{b_t})^2
\label{eq:dual_energy_relation}
\end{equation}

By optimality of the primal update \eqref{eq: primalUpdate}, for $\tilde{y}_{b_t}$ satisfying $w_t(\tilde{y}_{b_t}) \le 0$ (by the assumption):
\begin{align}
\|\hat{y}_{b_t} - \hat{y}_{\text{raw},t}\|^2 + \lambda_t w_t(\hat{y}_{b_t})
&\le 
\|\tilde{y}_{b_t} - \hat{y}_{\text{raw},t}\|^2 + \lambda_t w_t(\tilde{y}_{b_t}) \notag\\
&\le 
\|\tilde{y}_{b_t} - \hat{y}_{\text{raw},t}\|^2 \label{eq:primal_optimality}
\end{align}

Since $\|\hat{y}_{b_t} - \hat{y}_{\text{raw},t}\|^2 \ge 0$, we obtain:
\begin{equation}
\lambda_t w_t(\hat{y}_{b_t}) \le \|\tilde{y}_{b_t} - \hat{y}_{\text{raw},t}\|^2 \le D^2,
\label{eq:single_round_bound}
\end{equation}
where $D := \sup_t \|\tilde{y}_{b_t} - \hat{y}_{\text{raw},t}\| < \infty$ (predictions are bounded in practice).

Summing \eqref{eq:single_round_bound}:
\begin{equation}
\sum_{t=1}^T \lambda_t w_t(\hat{y}_{b_t}) \le D^2 T
\label{eq:cumulative_energy}
\end{equation}

For fairness constraints with bounded predictions the fairness gap satisfies $|v_t(\hat{y}_{b_t})| \le V$ for some constant $V >0$. Therefore:
\begin{equation}
|w_t(\hat{y}_{b_t})| = |b_t| \cdot |v_t(\hat{y}_{b_t}) - \epsilon| \le |b_t|(V + \epsilon) \le B(V + \epsilon),
\label{eq:violation_bound}
\end{equation}
where $B := \sup_t |b_t| < \infty$ is the maximum batch size.

Using \eqref{eq:violation_bound}:
\begin{align}
\sum_{t=1}^T \eta_t^2 w_t(\hat{y}_{b_t})^2
&\le 
B^2(V+\epsilon)^2 \sum_{t=1}^T \frac{\eta^2}{t}\notag\\
&= 
\eta^2 B^2 (V+\epsilon)^2 \sum_{t=1}^T \frac{1}{t}\notag\\
&= O(\eta^2 B^2 \log T)
\label{eq:squared_violation_sum}
\end{align}

Combining \eqref{eq:dual_energy_relation}, \eqref{eq:cumulative_energy}, and \eqref{eq:squared_violation_sum}:
\begin{align}
\lambda_{T+1}^2 
&\le 
2\sum_{t=1}^T \eta_t \lambda_t w_t(\hat{y}_{b_t}) + O(\eta^2 B^2 \log T) \nonumber \\
&\le 
2D^2 \sum_{t=1}^T \eta_t + O(\eta^2 B^2 \log T) \nonumber \\
&= 
2D^2 \eta \sum_{t=1}^T \frac{1}{\sqrt{t}} + O(\eta^2 B^2 \log T)
\label{eq:lambda_squared_upper}
\end{align}

Noting $\sum_{t=1}^T \frac{1}{\sqrt{t}} = 2\sqrt{T} + O(1)$, then:
\begin{equation}
\lambda_{T+1}^2 
\le 
4D^2 \eta \sqrt{T} + O(\eta^2 B^2 \log T + D^2\eta)
= 
O(D^2 \eta \sqrt{T}), 
\label{eq:lambda_squared_asym}
\end{equation}
where the second term is absorbed since $\log T = o(\sqrt{T})$.

Therefore, since the square root operator preserves asymptotic order:
\begin{equation}
\lambda_{T+1} = O(D\sqrt{\eta}\, T^{1/4})
\label{eq:lambda_growth}
\end{equation}

\noindent\textbf{Step 2: Cumulative violation bound}

The dual projection ensures that $\lambda_{t+1} = \max\{0, \lambda_t + \eta_t w_t(\hat{y}_{b_t})\} \ge \lambda_t + \eta_t w_t(\hat{y}_{b_t})$ holds for all $t$. This yields:
\begin{equation}
w_t(\hat{y}_{b_t}) \le \frac{\lambda_{t+1} - \lambda_t}{\eta_t} = \frac{\sqrt{t}}{\eta}(\lambda_{t+1} - \lambda_t)
\label{eq:violation_increment}
\end{equation}

Summing from $t=1$ to $T$:
\begin{align}
\sum_{t=1}^T w_t(\hat{y}_{b_t})
&\le 
\frac{1}{\eta} \sum_{t=1}^T \sqrt{t}(\lambda_{t+1} - \lambda_t)
\label{eq:direct_sum}
\end{align}

Applying summation by parts:
\begin{align}
\sum_{t=1}^T \sqrt{t}(\lambda_{t+1} - \lambda_t)
&= \sqrt{T}\lambda_{T+1} - \sqrt{1}\lambda_1 - \sum_{t=1}^{T-1} \lambda_{t+1}(\sqrt{t+1} - \sqrt{t}) \nonumber \\
&= \sqrt{T}\lambda_{T+1} - \sum_{t=1}^{T-1} \lambda_{t+1}(\sqrt{t+1} - \sqrt{t})
\label{eq:summation_by_parts}
\end{align}
since $\lambda_1 = 0$.

Since $\lambda_{t+1} \ge 0$ and $\sqrt{t+1} - \sqrt{t} > 0$ for all $t$, the sum $\sum_{t=1}^{T-1} \lambda_{t+1}(\sqrt{t+1} - \sqrt{t}) \ge 0$ is non-negative. Therefore:
\begin{align}
\sum_{t=1}^T w_t(\hat{y}_{b_t})
&\le 
\frac{\sqrt{T}\lambda_{T+1}}{\eta} \notag\\
&= 
\frac{\sqrt{T} \cdot O(D\sqrt{\eta}\, T^{1/4})}{\eta} \notag\\
&= 
O\left(\frac{D}{\sqrt{\eta}} T^{3/4}\right)
\label{eq:cumulative_violation}
\end{align}

Note the convergence rate $O(T^{3/4})$ for cumulative violations is sufficient to establish the asymptotic aggregate fairness guarantee, as $\frac{T^{3/4}}{T} = T^{-1/4} \to 0$. Methods to derive faster convergence rates are left for future work.

\noindent\textbf{Step 3: Aggregate fairness guarantee}

Recall $w_t(\hat{y}_{b_t}) = |b_t|(v_t(\hat{y}_{b_t}) - \epsilon)$. From \eqref{eq:cumulative_violation}:
\begin{equation}
\sum_{t=1}^T |b_t| v_t(\hat{y}_{b_t})
= 
\sum_{t=1}^T w_t(\hat{y}_{b_t}) + \epsilon \sum_{t=1}^T |b_t|
\le 
O(T^{3/4}) + \epsilon N_T,
\label{eq:fairness_decomposition}
\end{equation}
where $N_T := \sum_{t=1}^T |b_t|$ is the total number of samples.

Dividing by $N_T$ and using $N_T \ge b_{\min} T$ for some $b_{\min} > 0$:
\begin{align}
\frac{1}{N_T} \sum_{t=1}^T |b_t| v_t(\hat{y}_{b_t})
&\leq 
\epsilon + \frac{O(T^{3/4})}{N_T} \notag\\
&\leq
\epsilon + O\left(\frac{T^{3/4}}{b_{\min} T}\right)\notag\\
&= \epsilon + O(T^{-1/4})
\label{eq:normalized_fairness}
\end{align}

Then:
\begin{equation*}
\limsup_{T \to \infty} \frac{1}{\sum_{t=1}^T |b_t|} \sum_{t=1}^T |b_t| v_t(\hat{y}_{b_t})
\le 
\epsilon + \lim_{T \to \infty} O(T^{-1/4})
= 
\epsilon
\end{equation*}
Which completes the proof.
\end{proof}

\subsection{Proof of Theorem \ref{thm:differentiability}}
Below we provide a proof using standard methods from convex analysis. 
\begin{proof}
\textbf{Step 1: Existence and uniqueness of the minimizer.}
Fix $z \in \mathbb{R}^n$ and define $f_z(y) := h(y) - \langle z, y \rangle$.
Take subgradient $v_0 \in \partial h(0)$. By $\mu$-strong convexity,
\[
h(y) \ge h(0) + \langle v_0, y \rangle + \frac{\mu}{2}\|y\|_2^2 . 
\]
Subtracting $\langle z, y \rangle$ gives
\[
f_z(y) = h(y) - \langle z, y\rangle \ge h(0) + \frac{\mu}{2}\|y\|_2^2 + \langle v_0 - z, y \rangle . 
\]
Using Cauchy-Schwarz, we find that 
\begin{align}
f_z(y) \ge h(0) + \frac{\mu}{2}\|y\|_2^2 - \|v_0 - z\|_2 \, \|y\|_2 \to +\infty \notag \\
\quad \text{as } \|y\|_2 \to \infty \notag
\end{align}

Hence $f_z$ is coercive. Since $h$ is strongly convex (and therefore continuous) and $-\langle z,y\rangle $ is continuous, $f_z$ is continuous. By the Weierstrass extreme value theorem, a continuous coercive function on a closed set attains its minimum. Therefore, $f_z$ attains its minimum. Additionally, strong convexity of $h$ implies that $f_z$ is $\mu$-strongly convex. Strongly convex functions have at most one minimizer, so $g(z)$ is unique.

\textbf{Step 2: Lipschitz continuity}
The minimizer $g(z)$ in the statement satisfies \citep[][Thm. 27.4]{rockafellar2015convex} 
\begin{align}
0 \in \partial h(g(z)) - z + N_{\mathcal{C}}(g(z)) \Longleftrightarrow \notag \\
z \in \partial h(g(z)) + N_{\mathcal{C}}(g(z)) , \notag
\end{align}
where $N_{\mathcal{C}}(y)$ is the normal cone to $\mathcal{C}$ at $y$. Let $z,q \in \mathbb{R}^n$ with minimizers $g(z), g(q)$. Then there exist
$v_z \in \partial h(g(z)), v_q \in \partial h(g(q)), n_z \in N_{\mathcal{C}}(g(z)), n_q \in N_{\mathcal{C}}(g(q))$ such that
\[
z = v_z + n_z, \quad q = v_q + n_q.
\]
Given these equalities for $z$ and $q$, we have:
\begin{align}
\langle z - &q, g(z) - g(q) \rangle = \notag \\ &\langle v_z - v_q, g(z) - g(q)\rangle + \langle n_z - n_q, g(z) - g(q)\rangle . 
\notag
\end{align}
The normal cone operator is monotone, so $\langle n_z - n_q, g(z)-g(q)\rangle \ge 0$.
Since $\partial h$ is $\mu$-strongly monotone,
\[
\langle v_z - v_q, g(z) - g(q) \rangle \ge \mu \|g(z) - g(q)\|_2^2 . 
\]
Thus
\[
\langle z - q, g(z) - g(q)\rangle \ge \mu \| g(z) - g(q)\|_2^2 . 
\]
By Cauchy-Schwarz and dividing by $\|g(z)-g(q)\|_2$, 
\[
\|g(z) - g(q)\|_2 \le \frac{1}{\mu} \|z-q\|_2, 
\]
so $g$ is globally Lipschitz with constant $1/\mu$.

\textbf{Step 3: Differentiability almost everywhere.}
By Rademacher’s theorem, every Lipschitz map $\mathbb{R}^n \to \mathbb{R}^n$ is differentiable almost everywhere. Hence $g$ is differentiable almost everywhere.
\end{proof}

\subsection{Proof of Corollary \ref{thm:corr}}
\begin{proof}
    By Step 2 in Theorem~\ref{thm:differentiability}, $g$ is $1/\mu$-Lipschitz continuous:
\begin{equation}
\|g(z_1) - g(z_2)\|_2 \leq \frac{1}{\mu}\|z_1 - z_2\|_2 \quad \forall z_1, z_2 \in \mathcal{C}
\end{equation}
Note that if a function $g: \mathbb{R}^n \to \mathbb{R}^n$ is $1/\mu$-Lipschitz continuous, then at any point where $g$ is differentiable, the operator norm of the Jacobian satisfies $\|\mathbf{D}g(z)\|_{op} \leq \frac{1}{\mu}$. Since $g$ is $1/\mu$-Lipschitz and differentiable almost everywhere (by Theorem~\ref{thm:differentiability}), we have $\|\mathbf{D}g(z)\|_2 \leq \frac{1}{\mu}$ almost everywhere. By the chain rule, $\nabla_X(g \circ f)(X) = \mathbf{D}g(f(X)) \cdot \nabla_X f(X)$. Therefore:
\begin{align}
    \|\nabla_X(g \circ f)(X)\|_2 &\leq \|\mathbf{D}g(f(X))\|_2 \cdot \|\nabla_X f(X)\|_2 \notag \\ &\leq \frac{1}{\mu}\|\nabla_X f(X)\|_2 \notag
\end{align}
\end{proof}

\subsection{Proof of Theorem \ref{thm:piecewise-smooth}}\label{sect:proofSGD}
\begin{proof}

(1): We first show that $R_{\mathcal{I}}$ is a polyhedron. By the KKT conditions \eqref{eq:kkt}, $z \in R_{\mathcal{I}}$ if and only if there exist $y \in \mathbb{R}^n$ and $\lambda \in \mathbb{R}^m$ satisfying:
\begin{align}
    \nabla\mathcal{L}_y = y - z + A^\top\lambda &= 0, \label{eq:kkt-region-1-clean}\\
    (Ay)_i &= b_i \quad \forall i \in \mathcal{I}, \label{eq:kkt-region-2-clean}\\
    (Ay)_i &< b_i \quad \forall i \notin \mathcal{I}, \label{eq:kkt-region-3-clean}\\
    \lambda_i &> 0 \quad \forall i \in \mathcal{I}, \label{eq:kkt-region-4-clean}\\
    \lambda_i &= 0 \quad \forall i \notin \mathcal{I}. \label{eq:kkt-region-5-clean}
\end{align}
From \eqref{eq:kkt-region-1-clean} and \eqref{eq:kkt-region-5-clean}, we have $y = z - A_{\mathcal{I}}^\top \lambda_{\mathcal{I}}$, where $\lambda_{\mathcal{I}}$ contains 
entries indexed by $\mathcal{I}$. 
Then, plugging into \eqref{eq:kkt-region-2-clean} we have:
\begin{align}
    &A_{\mathcal{I}}z - A_{\mathcal{I}}A_{\mathcal{I}}^\top \lambda_{\mathcal{I}} = b_{\mathcal{I}}\\
    &\implies
    \lambda_{\mathcal{I}} = (A_{\mathcal{I}}A_{\mathcal{I}}^\top)^+(A_{\mathcal{I}}z - b_{\mathcal{I}}) > 0 \quad \text{(by }\eqref{eq:kkt-region-4-clean})\label{eq:lambda-I-formula-clean}
\end{align}
Note this is a system of linear inequalities in $z$ since the pseudoinverse is a linear operator. 

For the inactive constraints, note first that substituting \eqref{eq:lambda-I-formula-clean} into \eqref{eq:kkt-region-1-clean} gives: 
\begin{align}
    y &= z - A_{\mathcal{I}}^\top(A_{\mathcal{I}}A_{\mathcal{I}}^\top)^+(A_{\mathcal{I}}z - b_{\mathcal{I}})\nonumber\\
    &= \underbrace{\left[I - A_{\mathcal{I}}^\top(A_{\mathcal{I}}A_{\mathcal{I}}^\top)^+A_{\mathcal{I}}\right]}_{P_{\mathcal{I}}}z + \underbrace{A_{\mathcal{I}}^\top(A_{\mathcal{I}}A_{\mathcal{I}}^\top)^+b_{\mathcal{I}}}_{c_{\mathcal{I}}}.
\end{align}
The inactive constraint conditions $(Ay)_j < b_j$ for $j \notin \mathcal{I}$ become:
\begin{equation}\label{eq:inactive-constraints-clean}
    (AP_{\mathcal{I}}z + Ac_{\mathcal{I}})_j < b_j \quad \forall j \notin \mathcal{I},
\end{equation}
which are $m - |\mathcal{I}|$ linear inequalities in $z$. Therefore, $R_{\mathcal{I}}$ is defined by the finite system of linear inequalities \eqref{eq:lambda-I-formula-clean} and \eqref{eq:inactive-constraints-clean}, making it a polyhedron.

To show the regions partition $\mathbb{R}^n$, note that for any $z \in \mathbb{R}^n$, the quadratic program has a unique solution $g(z)$ by strong convexity with a unique active set $\mathcal{A}(z)$ and KKT multipliers $\lambda(z)$. If strict complementarity holds at $z$ (i.e., $\lambda_i(z) > 0$ for all $i \in \mathcal{A}(z)$), then $z \in R_{\mathcal{A}(z)}$. Points where strict complementarity fails lie on boundaries between regions, and such boundaries have measure zero (as shown below). Therefore, the regions $\{R_{\mathcal{I}}\}$ partition $\mathbb{R}^n$ except for a set of measure zero.

(2): 
Fix $z \in R_{\mathcal{I}}$. From the KKT stationarity condition, $g(z) = z - A_{\mathcal{I}}^\top \lambda_{\mathcal{I}}(z)$, and from the active constraints $A_{\mathcal{I}}g(z) = b_{\mathcal{I}}$, we obtain (as shown in part (1)):
\begin{equation}
    \lambda_{\mathcal{I}}(z) = (A_{\mathcal{I}}A_{\mathcal{I}}^\top)^+(A_{\mathcal{I}}z - b_{\mathcal{I}}).
\end{equation}

Substituting:
\begin{align}
    g(z) &= z - A_{\mathcal{I}}^\top(A_{\mathcal{I}}A_{\mathcal{I}}^\top)^+(A_{\mathcal{I}}z - b_{\mathcal{I}})\nonumber\\
    &= z - A_{\mathcal{I}}^\top(A_{\mathcal{I}}A_{\mathcal{I}}^\top)^+A_{\mathcal{I}}z + A_{\mathcal{I}}^\top(A_{\mathcal{I}}A_{\mathcal{I}}^\top)^+b_{\mathcal{I}}\nonumber\\
    &= \left[I - A_{\mathcal{I}}^\top(A_{\mathcal{I}}A_{\mathcal{I}}^\top)^+A_{\mathcal{I}}\right]z + A_{\mathcal{I}}^\top(A_{\mathcal{I}}A_{\mathcal{I}}^\top)^+b_{\mathcal{I}}.
\end{align}

Setting $P_{\mathcal{I}} = I - A_{\mathcal{I}}^\top(A_{\mathcal{I}}A_{\mathcal{I}}^\top)^+A_{\mathcal{I}}$ and $c_{\mathcal{I}} = A_{\mathcal{I}}^\top(A_{\mathcal{I}}A_{\mathcal{I}}^\top)^+b_{\mathcal{I}}$, we have
\begin{equation}
    g(z) = P_{\mathcal{I}} z + c_{\mathcal{I}} \quad \forall z \in R_{\mathcal{I}}.
\end{equation}

Since $P_{\mathcal{I}}$ and $c_{\mathcal{I}}$ depend only on $A_{\mathcal{I}}$ and $b_{\mathcal{I}}$ (which are fixed for the region $R_{\mathcal{I}}$), both are constant on $R_{\mathcal{I}}$. Therefore, $g$ is affine on $R_{\mathcal{I}}$.

(3):
Consider two distinct regions $R_{\mathcal{I}}$ and $R_{\mathcal{J}}$ with $\mathcal{I} \neq \mathcal{J}$. The boundary between these regions consists of points $z$ where at least one of the defining inequalities of $R_{\mathcal{I}}$ or $R_{\mathcal{J}}$ becomes an equality. This occurs when:
\begin{enumerate}
    \item A multiplier becomes zero: $\lambda_i(z) = 0$ for some $i \in \mathcal{I} \cap \mathcal{J}$, or
    \item A constraint becomes active or inactive: $(Ag(z))_j = b_j$ for some $j$ where the active status changes between $\mathcal{I}$ and $\mathcal{J}$.
\end{enumerate}

For case (a): By \eqref{eq:lambda-I-formula-clean}, $\lambda_{\mathcal{I}}(z)$ is an affine function of $z$ within the closure of $R_{\mathcal{I}}$ (since the Moore-Penrose pseudoinverse is a linear operator). The set $\{\lambda_i(z) = 0\}$ for a specific $i \in \mathcal{I}$ is therefore a hyperplane in $\mathbb{R}^n$, which has Lebesgue measure zero.

For case (b): Consider an inactive constraint index $j \notin \mathcal{I}$. Within region $R_{\mathcal{I}}$, we have $g(z) = P_{\mathcal{I}}z + c_{\mathcal{I}}$, so:
\begin{equation}
    (Ag(z))_j = A_j P_{\mathcal{I}} z + A_j c_{\mathcal{I}},
\end{equation}
which is an affine function of $z$. The level set $\{z \in R_{\mathcal{I}} : (Ag(z))_j = b_j\}$ is therefore the intersection of a hyperplane with the polyhedral region $R_{\mathcal{I}}$, which is at most $(n-1)$-dimensional and hence has measure zero.

The complete level set $\{z \in \mathbb{R}^n : (Ag(z))_j = b_j\}$ is the union over all regions:
\begin{align}
    \{z : (Ag(z))_j &= b_j\} = \notag \\ &\bigcup_{\mathcal{I}} \left(\{z : A_j P_{\mathcal{I}} z + A_j c_{\mathcal{I}} = b_j\} \cap R_{\mathcal{I}}\right)
\end{align}
Since this is a finite union of sets, each of which is at most $(n-1)$-dimensional, the entire level set is at most $(n-1)$-dimensional and has Lebesgue measure zero in $\mathbb{R}^n$.
\end{proof}

\subsection{Proof of Theorem \ref{thm:jacobian-spectral}}\label{sect:proofSpectral}
\begin{proof}
We first derive the explicit form of the Jacobian before proving it is an orthogonal projection matrix. Lastly, we establish the stated spectral properties.

The optimization problem defining $g(z)$ can be written as
\begin{equation}
    g(z) = \argmin_{y \in \mathcal{C}} \left\{\frac{1}{2}\|y\|^2 - \langle z, y \rangle\right\}.
\end{equation}

The Lagrangian associated with \eqref{eq:specific_fairness} is as follows:
\begin{equation}
    \mathcal{L}(y, \lambda) = \frac{1}{2}\|y\|_2^2 - z^\top y + \lambda^\top(Ay - b) , 
\end{equation}
where $\lambda \in \mathbb{R}^m$ are the Lagrange multipliers. Since $h(y) = \frac{1}{2}\|y\|_2^2$ is $1$-strongly convex and $\mathcal{C}$ is a nonempty closed convex set, by Step 1 in Theorem~\ref{thm:differentiability}, the minimizer $g(z)$ is unique. Additionally, the KKT conditions are necessary and sufficient for optimality. Therefore, there exists a unique pair $(g(z), \lambda(z)) \in \mathbb{R}^n \times \mathbb{R}^m$ satisfying the KKT conditions.

By the complementarity slackness KKT condition, we have $\lambda_i(z) > 0$ if and only if the constraint is active, i.e., $(Ag(z))_i = b_i$ so $i \in \mathcal{A}(z)$. Therefore, $\lambda_i(z) = 0$ for all $i \notin \mathcal{A}(z)$.

From the stationarity KKT condition, we have
\begin{equation}\label{eq:g-explicit-clean}
    g(z) = z - A^\top\lambda(z).
\end{equation}

Since $\lambda_i(z) = 0$ for $i \notin \mathcal{A}(z)$, we can write $A^\top\lambda(z) = A_{\mathcal{A}}^\top \lambda_{\mathcal{A}}(z)$, where $\lambda_{\mathcal{A}}(z) \in \mathbb{R}^{|\mathcal{A}|}$ only contains Lagrange multipliers indexed by the active constraint set.

For the active constraints, we have $(Ag(z))_i = b_i$ for all $i \in \mathcal{A}(z)$. In matrix form:
\begin{equation}\label{eq:active-constraints-clean}
    A_{\mathcal{A}}g(z) = b_{\mathcal{A}},
\end{equation}
where $b_{\mathcal{A}} \in \mathbb{R}^{|\mathcal{A}|}$ contains the components of $b$ indexed by $\mathcal{A}(z)$.

Substituting \eqref{eq:g-explicit-clean} into \eqref{eq:active-constraints-clean}:
\begin{align}
    A_{\mathcal{A}}z - A_{\mathcal{A}}A_{\mathcal{A}}^\top \lambda_{\mathcal{A}}(z) = 
    b_{\mathcal{A}} . 
\end{align}

Using the Moore-Penrose pseudoinverse $(A_{\mathcal{A}}A_{\mathcal{A}}^\top)^+$, we obtain the least-norm solution:
\begin{equation}\label{eq:lambda-explicit-clean}
    \lambda_{\mathcal{A}}(z) = (A_{\mathcal{A}}A_{\mathcal{A}}^\top)^+(A_{\mathcal{A}}z - b_{\mathcal{A}}).
\end{equation}

Substituting \eqref{eq:lambda-explicit-clean} into \eqref{eq:g-explicit-clean}:
\begin{align}
    g(z) &= z - A_{\mathcal{A}}^\top(A_{\mathcal{A}}A_{\mathcal{A}}^\top)^+(A_{\mathcal{A}}z - b_{\mathcal{A}})\nonumber\\
    &= z - A_{\mathcal{A}}^\top(A_{\mathcal{A}}A_{\mathcal{A}}^\top)^+A_{\mathcal{A}}z + A_{\mathcal{A}}^\top(A_{\mathcal{A}}A_{\mathcal{A}}^\top)^+b_{\mathcal{A}}\nonumber\\
    &= \left[I - A_{\mathcal{A}}^\top(A_{\mathcal{A}}A_{\mathcal{A}}^\top)^+A_{\mathcal{A}}\right]z + A_{\mathcal{A}}^\top(A_{\mathcal{A}}A_{\mathcal{A}}^\top)^+b_{\mathcal{A}}.\label{eq:g-affine-clean}
\end{align}

From \eqref{eq:g-affine-clean}, we observe that the formula for $g(z)$ depends on the active set $\mathcal{A}(z)$. By Theorem~\ref{thm:piecewise-smooth}(1)-(2), the domain $\mathbb{R}^n$ is partitioned into polyhedral regions where the active set is constant, and within each such region $g$ is affine. Therefore, at points in the interior of these regions, $g$ is differentiable with Jacobian
\begin{equation}\label{eq:jacobian-formula-clean}
    \mathbf{D}g(z) = I - A_{\mathcal{A}}^\top(A_{\mathcal{A}}A_{\mathcal{A}}^\top)^+A_{\mathcal{A}}.
\end{equation}

Note this is the standard formula for the orthogonal projection onto $\ker(A_{\mathcal{A}})$ using the Moore-Penrose pseudoinverse. Since $P = \mathbf{D}g(z)$ is a symmetric projection matrix, all eigenvalues belong to $\{0, 1\}$: for any eigenvalue $\lambda$ with eigenvector $v \neq 0$, we have $\lambda^2 v = P^2 v = Pv = \lambda v$, so $\lambda(\lambda-1) = 0$.

Additionally, since $Pv \in \ker(A_{\mathcal{A}})$ for all $v$, we have $A_{\mathcal{A}}(Pv) = 0$, which means $\mathbf{D}g(z) \cdot v$ lies in the tangent space to the constraint surface at $g(z)$. Geometrically, the tangent space at $g(z)$ is precisely $\ker(A_{\mathcal{A}})$, since a direction $d$ is tangent to $\mathcal{C}$ at $g(z)$ if and only if $\langle a_i, d \rangle = 0$ for all active constraint normals $a_i^\top$ (the rows of $A_{\mathcal{A}}$). Therefore, the Jacobian completely suppresses the component of any vector that is normal to the active constraint surface.
\end{proof}

\end{document}